\def\Km{{\mathit{Km}}}                  % monotone Kolmogorov complexity
\def\BR{{B\!R}}                         % black raven
\def\nBR{{\overline{B}\!R}}             % non-black raven
\def\BnR{{B\!\overline{R}}}             % black non-raven
\def\nBnR{{\overline{B}\!\overline{R}}} % non-black non-raven
\lstdefinelanguage{pseudo}{
	morekeywords={program, if, then, else, while, for, do, not, return, output, true, false},
	sensitive=false,
	morecomment=[l]{//},
	morecomment=[s]{/*}{*/},
	morestring=[b]",
}
\title{Solomonoff Induction Violates Nicod's Criterion\thanks{The final publication is available at \url{http://link.springer.com/}.}}
\author{Jan Leike \and Marcus Hutter}
\titlerunning{Solomonoff Induction Violates Nicod's Criterion}
\authorrunning{J.\ Leike and M.\ Hutter}
\institute{
  Australian National University \\
  \texttt{\{jan.leike|marcus.hutter\}@anu.edu.au}
}
\begin{document}

\maketitle

\begin{abstract}
\emph{Nicod's criterion} states that
observing a black raven is evidence for the hypothesis $H$
that all ravens are black.
We show that Solomonoff induction does not satisfy Nicod's criterion:
there are time steps in which
observing black ravens \emph{decreases} the belief in $H$.
Moreover, while observing any computable infinite string compatible with $H$,
the belief in $H$ decreases infinitely often when using
the unnormalized Solomonoff prior,
but only finitely often when using
the normalized Solomonoff prior.
We argue that
the fault is not with Solomonoff induction;
instead we should reject Nicod's criterion.
\end{abstract}

\begin{keywords}
Bayesian reasoning,
confirmation,
disconfirmation,
Hempel's paradox,
equivalence condition,
Solomonoff normalization.
\end{keywords}

%%%%%%%%%%%%%%%%%%%%%%%%%%%%%%%%%%%%%%%%%%%%%%%%%%%%%%%%%%%%%%%%%%%%
\section{Introduction}
\label{sec:introduction}

%\paragraph{Inductive Inference.}
Inductive inference,
how to generalize from examples,
is the cornerstone of scientific investigation.
But we cannot justify the use of induction on the grounds that
it has reliably worked before,
because this argument presupposes induction.
Instead, we need to give \emph{deductive} (logical) arguments for the use of induction.
Today we know a formal solution to the problem of induction:
Solomonoff's theory of learning~\cite{Solomonoff:1964,Solomonoff:1978},
also known as \emph{universal induction} or \emph{Solomonoff induction}.
It is a method of induction based on
Bayesian inference~\cite{Jaynes:2003} and algorithmic probability~\cite{LV:2008}.
Because it is solidly founded in abstract mathematics,
it can be justified purely deductively.

%\paragraph{Solomonoff Induction.}
Solomonoff defines a prior probability distribution $M$
that assigns to a string $x$ the probability that
a universal monotone Turing machine prints something starting with $x$
when fed with fair coin flips.
Solomonoff's prior encompasses \emph{Ockham's razor}
by favoring simple explanations over complex ones:
algorithmically simple strings have short programs and
are thus assigned higher probability than complex strings that
do not have short programs.
Moreover, Solomonoff's prior respects
\emph{Epicurus' principle} of multiple explanation
by never discarding possible explanations:
any possible program that explains the string contributes
to the probability~\cite{Hutter:2007universal}.

%\paragraph{Solomonoff Induction is the Solution.}
For data drawn from a computable probability distribution $\mu$,
Solomonoff induction will converge to
the correct belief about any hypothesis~\cite{BD:1962}.
Moreover, this can be used to produce reliable predictions extremely fast:
Solomonoff induction will make a total of at most $E + O(\sqrt{E})$ errors
when predicting the next data points,
where $E$ is the number of errors of
the informed predictor that knows $\mu$~\cite{Hutter:2001error}.
In this sense, Solomonoff induction solves the induction problem~\cite{RH:2011}.
It is incomputable,
hence it can only serve as an ideal
that any practical learning algorithm should strive to approximate.

%\paragraph{Bayesian Confirmation.}
But does Solomonoff induction live up to this ideal?
Suppose we entertain the hypothesis $H$ that all ravens are black.
Since this is a universally quantified statement,
it is refuted by observing one counterexample: a non-black raven.
But at any time step, we have observed only a finite number of
the potentially infinite number of possible cases.
Nevertheless, Solomonoff induction
maximally confirms the hypothesis $H$ asymptotically.

%\paragraph{The Paradox of Confirmation}
This paper is motivated by
a problem of inductive inference extensively discussed in the literature:
the \emph{paradox of confirmation},
also known as \emph{Hempel's paradox}~\cite{Hempel:1945}.
It relies on the following three principles.
\begin{itemize}
\item \emph{Nicod's criterion}~\cite[p.\ 67]{Nicod:1961}:
	observing an $F$ that is a $G$
	increases our belief in the hypothesis that all $F$s are $G$s.
	% Nicod (page 67): "On vient, en effet, de prouver que toute vérification qui n'était pas certaine d'avance rend la loi plus probable, à la seule condition que la loi possède déjà quelque chance, si légère soit-elle, d'être vraie. Soit X et A les carac­tères qu'elle unit. Pour que la découverte de A dans un cas de X, où sa présence n'eût pu, par ce qu'on savait déjà, être prévue avec certitude, rende X entraîne A plus probable, la seule assomption qu'il faille faire est donc que X entraîne A soit déjà probable à quelque degré p non nul, mais aussi faible que l'on veut."
\item \emph{The equivalence condition}:
	logically equivalent hypothesis are confirmed or disconfirmed
	by the same evidence.
\item \emph{The paradoxical conclusion}:
	a green apple confirms $H$.
\end{itemize}
The argument goes as follows.
The hypothesis $H$ is logically equivalent to
the hypothesis $H'$ that all non-black objects are non-ravens.
According to Nicod's criterion,
any non-black non-raven, such as a green apple, confirms $H'$.
But then the equivalence condition
entails the paradoxical conclusion.

% Random awesome quote from Vranas:
%‘What do you like best about philosophy?’ my grandmother asks.
%‘Every philosopher is brilliant!’ I unhesitatingly reply.
%‘That’s hardly believable,’ my grandmother responds.
%To convince her, I perform an experiment.
%I choose randomly a person, Rex, from the telephone directory.
%Predictably enough, Rex is neither brilliant nor a philosopher.
%‘You see,’ I crow, ‘this confirms my claim that every philosopher is brilliant!’
%My grandmother is unimpressed.

The paradox of confirmation has been discussed extensively
in the literature on the philosophy of science%
~\cite{Hempel:1945,Good:1960,Mackie:1963,Good:1967,Hempel:1967,Maher:1999,Vranas:2004};
see \cite{Swinburne:1971} for a survey.
Support for Nicod's criterion is not uncommon%
~\cite{Mackie:1963,Hempel:1967,Maher:1999} and no consensus is in sight.

Using results from algorithmic information theory
we show that Solomonoff induction avoids the paradoxical conclusion
because it does not fulfill Nicod's criterion.
There are time steps when (counterfactually) observing a black raven
disconfirms the hypothesis that all ravens are black
(\autoref{thm:M-decreases} and \autoref{cor:Mnorm-decreases}).
In the deterministic setting
Nicod's criterion is even violated infinitely often
(\autoref{thm:decrease-io} and \autoref{cor:Mnorm-decreases-io}).
However, if we \emph{normalize} Solomonoff's prior
and observe a deterministic computable infinite string,
Nicod's criterion is violated at most finitely many times
(\autoref{thm:Mnorm-on-sequence}).
Our results are independent of the choice of the universal Turing machine.
A list of notation can be found on
\hyperref[app:notation]{page~\pageref*{app:notation}}.

%%%%%%%%%%%%%%%%%%%%%%%%%%%%%%%%%%%%%%%%%%%%%%%%%%%%%%%%%%%%%%%%%%%%
\section{Preliminaries}
\label{sec:preliminaries}

%\paragraph{Strings.}
Let $\X$ be some finite set called \emph{alphabet}.
The set $\X^* := \bigcup_{n=0}^\infty \X^n$ is
the set of all finite strings over the alphabet $\X$, and
the set $\X^\infty$ is
the set of all infinite strings over the alphabet $\X$.
The empty string is denoted by $\epsilon$, not to be confused
with the small positive rational number $\varepsilon$.
Given a string $x \in \X^*$, we denote its length by $|x|$.
For a (finite or infinite) string $x$ of length $\geq k$,
we denote with $x_{1:k}$ the first $k$ characters of $x$,
and with $x_{<k}$ the first $k - 1$ characters of $x$.
The notation $x_{1:\infty}$ stresses that $x$ is an infinite string.
We write $x \sqsubseteq y$ iff $x$ is a prefix of $y$, i.e.,
$x = y_{1:|x|}$.

%\paragraph{Semimeasures.}
A \emph{semimeasure} over the alphabet $\X$ is
a probability measure on the probability space
$\X^\sharp := \X^* \cup X^\infty$
whose $\sigma$-algebra is generated by the \emph{cylinder sets}
$\Gamma_x := \{ xz \mid z \in \X^\sharp \}$% for all $x \in \X^*$
~\cite[Ch.\ 4.2]{LV:2008}.
If a semimeasure assigns zero probability to every finite string,
then it is called a \emph{measure}. % This is really confusing!
Measures and semimeasures are uniquely defined by their values on
cylinder sets.
For convenience
we identify a string $x \in \X^*$ with its cylinder set $\Gamma_x$.

%\paragraph{Multiplicative and Additive Constants.}
For two functions $f, g: \X^* \to \mathbb{R}$
we use the notation $f \timesgeq g$ iff
there is a constant $c > 0$ such that $f(x) \geq cg(x)$ for all $x \in \X^*$.
Moreover, we define $f \timesleq g$ iff $g \timesgeq f$ and
we define $f \timeseq g$ iff $f \timesleq g$ and $f \timesgeq g$.
Note that $f \timeseq g$ does \emph{not} imply that
there is a constant $c$ such that $f(x) = cg(x)$ for all $x$.

%\paragraph{Monotone Complexity.}
Let $U$ denote some universal Turing machine.
The \emph{Kolmogorov complexity $K(x)$} of a string $x$ is
the length of the shortest program on $U$ that prints $x$ and then halts.
A string $x$ is \emph{incompressible} iff $K(x) \geq |x|$.
We define $m(t) := \min_{n \geq t} K(n)$,
the \emph{monotone lower bound on $K$}.
Note that $m$ grows slower than any unbounded computable function.
(Its inverse is a version of the \emph{busy beaver} function.)
We also use the same machine $U$
as a monotone Turing machine by ignoring the halting state
and using a write-only output tape.
The \emph{monotone Kolmogorov complexity $\Km(x)$} denotes
the length of the shortest program on the monotone machine $U$
that prints a string starting with $x$.
Since monotone complexity does not require the machine to halt,
there is a constant $c$ such that  $\Km(x) \leq K(x) + c$ for all $x \in X^*$.

%\paragraph{Solomonoff's Prior.}
\emph{Solomonoff's prior $M$}~\cite{Solomonoff:1964} is defined as
the probability that the universal monotone Turing machine computes
a string when fed with fair coin flips in the input tape.
Formally,
\[
     M(x)
~:=~ \sum_{p:\, x \sqsubseteq U(p)} 2^{-|p|}.
\]
Equivalently, the Solomonoff prior $M$ can be defined as
a mixture over all lower semicomputable semimeasures~\cite{WSH:2011}.

The function $M$ is a lower semicomputable semimeasure,
but not computable and not a measure~\cite[Lem.\ 4.5.3]{LV:2008}.
It can be turned into a measure $M\norm$
using \emph{Solomonoff normalization}~\cite[Sec.\ 4.5.3]{LV:2008}:
$M\norm(\epsilon) := 1$ and
for all $x \in \X^*$ and $a \in \X$,
\begin{equation}\label{eq:normalization}
   M\norm(xa)
:= M\norm(x) \frac{M(xa)}{\sum_{b \in \X} M(xb)}
\end{equation}
since $M(x) > 0$ for all $x \in \X^*$.

Every program contributes to $M$, so
we have that $M(x) \geq 2^{-\Km(x)}$.
However, the upper bound $M(x) \timesleq 2^{-\Km(x)}$ is generally false%
~\cite{Gacs:1983}.
Instead, the following weaker statement holds.
\begin{lemma}[{\cite{Levin:1974} as cited in \cite[p.\ 75]{Gacs:1983}}]
\label{lem:Levin}
Let $E \subset \X^*$ be a recursively enumerable and prefix-free set.
Then there is a constant $c_E \in \mathbb{N}$ such that
$M(x) \leq 2^{-\Km(x)+c_E}$ for all $x \in E$.
\end{lemma}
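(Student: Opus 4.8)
The plan is to reduce the claim $\Km(x)\le -\log_2 M(x)+c_E$ to a bound on prefix complexity and then invoke the inequality $\Km\le K+O(1)$ already recorded in the preliminaries. The starting observation is that prefix-freeness of $E$ turns $M$ restricted to $E$ into a discrete subprobability distribution: the cylinders $\Gamma_x$ for $x\in E$ are pairwise disjoint, and since $M$ is a semimeasure this forces $\sum_{x\in E} M(x)\le M(\epsilon)\le 1$. Thus the function $\nu$ defined by $\nu(x):=M(x)$ for $x\in E$ and $\nu(x):=0$ otherwise is a \emph{discrete} semimeasure on the countable set $E$. First I would check that $\nu$ is lower semicomputable: $E$ is r.e., so fix an enumeration $x_1,x_2,\dots$ of $E$, and $M$ is lower semicomputable, so fix a computable nondecreasing approximation $M_t(x)\nearrow M(x)$; together these approximate $\nu$ from below by a single effective procedure.

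Because $\nu$ is a lower semicomputable discrete semimeasure, the coding theorem applies and yields $K(x)\le -\log_2\nu(x)+c_E=-\log_2 M(x)+c_E$ for all $x\in E$, where $c_E$ absorbs the description of the enumeration of $E$ and of the approximation of $M$, and so depends on $E$ but not on $x$. To make the coding step explicit rather than merely cited, I would run the following request process: whenever the current approximation $M_t(x)$ first exceeds a dyadic threshold $2^{-k}$, emit a Kraft--Chaitin request for a codeword of length $k+1$ with output $x$. As $M_t(x)$ climbs from $0$ to $M(x)$ it crosses exactly the thresholds $2^{-k}$ with $k\ge\lceil -\log_2 M(x)\rceil$, so the requests for a single $x$ have total weight $\sum_{k\ge\lceil -\log_2 M(x)\rceil} 2^{-(k+1)}=2^{-\lceil -\log_2 M(x)\rceil}\le M(x)$. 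Summing over $E$ gives grand total $\le\sum_{x\in E}M(x)\le 1$, so Kraft's inequality holds, and the Kraft--Chaitin theorem supplies a prefix machine on which $x$ has a program of length at most $\lceil -\log_2 M(x)\rceil+1$; universality of $K$ then turns this into $K(x)\le -\log_2 M(x)+c_E$.

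Finally I would combine this with the constant-overhead inequality $\Km(x)\le K(x)+c$ from the preliminaries to obtain $\Km(x)\le -\log_2 M(x)+c_E$, i.e.\ $M(x)\le 2^{-\Km(x)+c_E}$ after renaming the constant, which is the claimed bound.

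The main obstacle is the lower semicomputability of $M$: we never know the exact value $M(x)$, so the request process must be driven incrementally by the approximations $M_t$, and the argument hinges on verifying that the repeated requests triggered as $M_t(x)$ passes successive dyadic thresholds still have total weight only a bounded multiple of $\sum_{x\in E}M(x)$, keeping Kraft's inequality satisfied. The role of prefix-freeness is exactly to guarantee that this sum is at most $1$; without it the values $M(x)$ would not form a subprobability distribution and the coding theorem would not apply. This is consistent with the remark just before the lemma that the bound $M\timesleq 2^{-\Km}$ fails in general.
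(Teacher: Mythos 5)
Your proposal is correct and follows essentially the same route as the paper: restrict $M$ to the prefix-free r.e.\ set $E$ to get a lower semicomputable discrete semimeasure $\nu$, apply the coding theorem to get $K(x) \leq -\log M(x) + O(1)$ on $E$, and conclude via $\Km(x) \leq K(x) + O(1)$. The only difference is that you unfold the Kraft--Chaitin proof of the coding theorem, which the paper simply cites as \cite[Cor.\ 4.3.1]{LV:2008}.
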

\begin{proof}
Define
\[
\nu(x) :=
\begin{cases}
M(x), &\text{if } x \in E, \text{ and} \\
0, &\text{otherwise}.
\end{cases}
\]
The semimeasure $\nu$ is lower semicomputable
because $E$ is recursively enumerable.
Furthermore, $\sum_{x \in \X^*} \nu(x) \leq 1$
because $M$ is a semimeasure and $E$ is prefix-free.
Therefore $\nu$ is a discrete semimeasure.
Hence there are constant $c$ and $c'$ such that
$    \Km(x)
\leq K(x) + c
\leq -\log \nu(x) + c + c'
=    -\log M(x) + c + c'$~\cite[Cor.\ 4.3.1]{LV:2008}.
\qed
\end{proof}

\begin{lemma}[{\cite[Sec.\ 4.5.7]{LV:2008}}]
\label{lem:Martin-Loef}
For any computable measure $\mu$
the set of $\mu$-Martin-Löf-random sequences has $\mu$-probability one:
\[
  \mu(\{ x \in \X^\infty
        \mid \exists c \forall t.\; M(x_{1:t}) \leq c \mu(x_{1:t}) \})
= 1.
\]
\end{lemma}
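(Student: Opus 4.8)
The plan is to treat the ratio of $M$ to $\mu$ along a prefix as a stochastic process on the probability space $(\X^\infty, \mu)$ and to show it stays bounded $\mu$-almost surely. Since $\mu$ is a measure, it assigns zero to every finite string, so the set of sequences $x$ with $\mu(x_{1:t}) = 0$ for some $t$ is $\mu$-null; discarding it, I may assume $\mu(x_{1:t}) > 0$ for all $t$ and define
\[
  Z_t(x) ~:=~ \frac{M(x_{1:t})}{\mu(x_{1:t})}, \qquad t \geq 0,
\]
with $x_{1:0} := \epsilon$. It then suffices to show $\sup_{t} Z_t(x) < \infty$ for $\mu$-almost every $x$, since the finite value $c := \sup_t Z_t(x)$ is exactly a constant witnessing membership of $x$ in the event whose measure we want to be one.

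The first key step is to check that $(Z_t)_{t \geq 0}$ is a nonnegative supermartingale with respect to the filtration generated by the first $t$ symbols. Conditioning on $x_{1:t}$, the next symbol $b \in \X$ has $\mu$-conditional probability $\mu(x_{1:t}b)/\mu(x_{1:t})$, so
\[
  \mathbb{E}_\mu\bigl[ Z_{t+1} \mid x_{1:t} \bigr]
  ~=~ \sum_{b \in \X} \frac{\mu(x_{1:t}b)}{\mu(x_{1:t})} \cdot \frac{M(x_{1:t}b)}{\mu(x_{1:t}b)}
  ~=~ \frac{1}{\mu(x_{1:t})} \sum_{b \in \X} M(x_{1:t}b)
  ~\leq~ \frac{M(x_{1:t})}{\mu(x_{1:t})} ~=~ Z_t,
\]
where the inequality is precisely the semimeasure property $\sum_{b \in \X} M(x_{1:t}b) \leq M(x_{1:t})$. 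This single inequality is the conceptual core of the argument: it is the only place the (sub-probability) nature of $M$ enters, and everywhere else $M$ behaves like a measure.

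The second step is to apply the maximal inequality for nonnegative supermartingales (Ville's inequality): for every $\lambda > 0$,
\[
  \mu\Bigl( \sup_{t \geq 0} Z_t \geq \lambda \Bigr)
  ~\leq~ \frac{\mathbb{E}_\mu[Z_0]}{\lambda}
  ~=~ \frac{M(\epsilon)}{\lambda}
  ~\leq~ \frac{1}{\lambda},
\]
using $\mu(\epsilon) = 1$ and $M(\epsilon) \leq 1$. Letting $\lambda \to \infty$ gives $\mu(\sup_t Z_t = \infty) = 0$, i.e.\ $\sup_t Z_t < \infty$ holds on a set of full $\mu$-measure, which is the claim.

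I expect the work here to be bookkeeping rather than genuine difficulty: the only care required is isolating the $\mu$-null set on which $Z_t$ is undefined and quoting the maximal inequality in the form valid for supermartingales whose initial value may be below one. Note that computability of neither $M$ nor $\mu$ is used for this direction --- only that $M$ is a semimeasure and $\mu$ a measure; computability of $\mu$ is what makes the surviving full-measure set coincide with the genuinely $\mu$-Martin-L\"of-random sequences, so this lemma is the measure-one half of Levin's characterization of randomness via the universal semimeasure.
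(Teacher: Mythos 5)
The paper does not prove this lemma at all: it is imported verbatim from Li and Vit\'anyi (the cited Sec.~4.5.7), so there is no in-paper argument to compare yours against; what matters is whether your blind proof stands on its own, and it does. The ratio $Z_t = M(x_{1:t})/\mu(x_{1:t})$ is a nonnegative $\mu$-supermartingale precisely because of the semimeasure inequality $\sum_{b} M(xb) \leq M(x)$, Ville's maximal inequality then gives $\mu(\sup_t Z_t \geq \lambda) \leq M(\epsilon)/\lambda \leq 1/\lambda$, and letting $\lambda \to \infty$ yields the claim --- this is essentially the same mechanism as the textbook proof, which unrolls Ville's inequality combinatorially: take the prefix-free set of minimal strings $y$ with $M(y) > \lambda \mu(y)$; the $\mu$-measure of the union of those cylinders is at most $\sum_y M(y)/\lambda \leq 1/\lambda$ by prefix-freeness and the semimeasure property. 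Two minor bookkeeping points. First, your justification for discarding the bad set is slightly off: the relevant fact is not that $\mu$ assigns zero to finite strings (that is what ``measure'' means in this paper, i.e.\ $\mu(\{y\})=0$), but that $\{x \in \X^\infty \mid \exists t.\; \mu(x_{1:t}) = 0\}$ is a countable union of $\mu$-null cylinders, hence null --- true for any $\mu$, measure or not. Second, in the conditional-expectation computation the sum should run only over symbols $b$ with $\mu(x_{1:t}b) > 0$ (the other branches are conditionally null), which only strengthens the inequality. Your closing remark is also accurate: computability of $\mu$ plays no role in the displayed measure-one statement and is only needed to identify the dominance-defined set with the $\mu$-Martin-L\"of-random sequences, which is how the paper's phrasing of the lemma should be read.
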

%\begin{proof}
%Follows from
%\cite[Lem.\ 4.5.6 (ii) (d)]{LV:2008},
%\cite[Cor.\ 4.5.3]{LV:2008} and
%the definition of $\mu$-Martin-Löf-randomness.
%\end{proof}

%%%%%%%%%%%%%%%%%%%%%%%%%%%%%%%%%%%%%%%%%%%%%%%%%%%%%%%%%%%%%%%%%%%%
\section{Solomonoff and the Black Ravens}
\label{sec:Solomonoff-and-the-black-ravens}

\paragraph{Setup.}
In order to formalize the black raven problem
(in line with \cite[Sec.\ 7.4]{RH:2011}),
we define two predicates: blackness $B$ and ravenness $R$.
There are four possible observations:
a black raven $\BR$,
a non-black raven $\nBR$,
a black non-raven $\BnR$, and
a non-black non-raven $\nBnR$.
Therefore our alphabet consists of
four symbols corresponding to each of the possible observations,
$\X := \{ \BR, \nBR, \BnR, \nBnR \}$.
We will not make the formal distinction between
observations and the symbols that represent them,
and simply use both interchangeably.

We are interested in the hypothesis `all ravens are black'.
Formally, it corresponds to the set
\begin{equation}\label{def:H}
     H
~:=~ \{ x \in \X^\sharp \mid x_t \neq \nBR \;\forall t \}
~ =~ \{ \BR, \BnR, \nBnR \}^\sharp,
\end{equation}
the set of all finite and infinite strings
in which the symbol $\nBR$ does not occur.
Let $H^c := \X^\sharp \setminus H$ be the complement hypothesis
`there is at least one non-black raven'.
We fix the definition of $H$ and $H^c$ for the rest of this paper.

%\paragraph{Induction for $H$.}
Using Solomonoff induction,
our prior belief in the hypothesis $H$ is
\[
    M(H)
~=~ \sum_{p:\, U(p) \in H} 2^{-|p|},
\]
the cumulative weight of all programs that do not print any non-black ravens.
In each time step $t$,
we make one observation $x_t \in \X$.
Our \emph{history} $x_{<t} = x_1 x_2 \ldots x_{t-1}$
is the sequence of all previous observations.
We update our belief with Bayes' rule
in accordance with the Bayesian framework for learning~\cite{Jaynes:2003}:
our \emph{posterior belief} in the hypothesis $H$ is
\[
    M(H \mid x_{1:t})
~=~ \frac{M(H \cap x_{1:t})}{M(x_{1:t})}.
\]
We say that the observation $x_t$ \emph{confirms} the hypothesis $H$ iff
$M(H \mid x_{1:t}) > M(H \mid x_{<t})$ (the belief in $H$ increases), and
we say that the observation $x_t$ \emph{disconfirms} the hypothesis $H$ iff
$M(H \mid x_{1:t}) < M(H \mid x_{<t})$ (the belief in $H$ decreases).
If $M(H \mid x_{1:t}) = 0$, we say that $H$ is \emph{refuted}, and
if $M(H \mid x_{1:t}) \to 1$ as $t \to \infty$,
we say that $H$ is \emph{(maximally) confirmed asymptotically}.

\paragraph{Confirmation and Refutation.}
Let the sequence $x_{1:\infty}$ be sampled
from a computable measure $\mu$, the \emph{true environment}.
If we observe a non-black raven, $x_t = \nBR$,
the hypothesis $H$ is refuted
since $H \cap x_{1:t} = \emptyset$
and this implies $M(H \mid x_{1:t}) = 0$.
In this case, our enquiry regarding $H$ is settled.
For the rest of this paper, we focus on the interesting case:
we assume our hypothesis $H$ is in fact true in $\mu$ ($\mu(H) = 1$),
i.e., $\mu$ does not generate any non-black ravens.
Since Solomonoff's prior $M$ dominates all computable measures,
there is a constant $w_\mu$ such that
\begin{equation}\label{eq:universal-dominance}
\forall x \in \X^* \quad M(x) \geq w_\mu \mu(x).
\end{equation}
Thus
Blackwell and Dubins' famous merging of opinions theorem~\cite{BD:1962}
implies
\begin{equation}\label{eq:Blackwell-Dubins}
M(H \mid x_{1:t}) \to 1
\text{ as $t \to \infty$ with $\mu$-probability one}.\footnote{%
Blackwell-Dubins' theorem refers to (probability) measures,
but technically $M$ is a semimeasure.
However, we can view $M$ as a measure
by introducing an extra symbol to our alphabet~\cite[p.\ 264]{LV:2008}.
This preserves dominance \eqref{eq:universal-dominance},
and hence absolute continuity,
which is the precondition for Blackwell-Dubins' theorem.
}
\end{equation}
Therefore our hypothesis $H$ is confirmed asymptotically~\cite[Sec.\ 7.4]{RH:2011}.
However, convergence to $1$ is extremely slow,
slower than any unbounded computable function,
since $1 - M(H \mid x_{1:t}) \timesgeq 2^{-m(t)}$ for all $t$.

%\paragraph{The Equivalence Condition.}
In our setup,
the equivalence condition holds trivially:
a logically equivalent way of formulating a hypothesis
yields the same set of infinite strings,
therefore in our formalization it constitutes the same hypothesis.
%
%\paragraph{Nicod's Criterion.}
The central question of this paper is Nicod's criterion,
which refers to the assertion that
$\BR$ and $\nBnR$ confirm $H$, i.e.,
$M(H \mid x_{1:t} \BR) > M(H \mid x_{<t})$ and
$M(H \mid x_{1:t} \nBnR) > M(H \mid x_{<t})$ for all strings $x_{<t}$.

%%%%%%%%%%%%%%%%%%%%%%%%%%%%%%%%%%%%%%%%%%%%%%%%%%%%%%%%%%%%%%%%%%%%
\section{Disconfirming H}
\label{sec:disconfirming-H}

We first illustrate the violation of Nicod's criterion
by defining a particular universal Turing machine.

\begin{example}[Black Raven Disconfirms]\label{ex:black-raven-disconfirms}
The observation of a black raven can falsify a short program
that supported the hypothesis $H$.
Let $\varepsilon > 0$ be a small rational number.
We define a semimeasure $\rho$ as follows.
\begin{align*}
\rho(\nBnR^\infty) &:= \tfrac{1}{2}
&
\rho(\BR^\infty)      &:= \tfrac{1}{4}
&
\rho(\BR\, \nBR^\infty) &:= \tfrac{1}{4} - \varepsilon
&
\rho(x) &:= 0 \text{ otherwise}.
\end{align*}
To get a universally dominant semimeasure $\xi$,
we mix $\rho$ with the universally dominant semimeasure $M$.
\[
\xi(x) := \rho(x) + \varepsilon M(x).
\]
For computable $\varepsilon$, the mixture $\xi$ is
a lower semicomputable semimeasure.
Hence there is a universal monotone Turing machine
whose Solomonoff prior is equal to $\xi$~\cite[Lem.\ 13]{WSH:2011}.
Our a priori belief in $H$ at time $t = 0$ is
\[
     \xi(H \mid \epsilon)
=    \xi(H)
\geq \rho(\nBnR^\infty) + \rho(\BR^\infty)
=    75\%,
\]
while our a posteriori belief in $H$ after seeing a black raven is
\[
     \xi(H \mid \BR)
=    \frac{\xi(H \cap \BR)}{\xi(\BR)}
\leq \frac{\rho(\BR^\infty) + \varepsilon}{\rho(BR^\infty) + \rho(\BR\nBR^\infty)}
=    \frac{\tfrac{1}{4} + \varepsilon}{\tfrac{1}{2} - \varepsilon}
<    75 \%
\]
for $\varepsilon \leq 7\%$.
Hence observing a black raven in the first time step disconfirms
the hypothesis $H$.
\hfill$\Diamond$ % end of example
\end{example}

The rest of this section is dedicated to show that
this effect occurs independent of the universal Turing machine $U$
and on all computable infinite strings.

\begin{figure}[t]
\begin{minipage}{0.495\textwidth}
\centering
\begingroup
\setlength{\tabcolsep}{10pt}
\renewcommand{\arraystretch}{2}
\begin{tabular}{r|cc}
$M(\,\cdot\,)$                          & $H$ & $H^c$ \\
\hline
$\bigcup_{a \neq x_t} \Gamma_{x_{<t}a}$ & $A$ & $B$ \\
$\Gamma_{x_{1:t}}$                      & $C$ & $D$ \\
$\{ x_{<t} \}$                          & $E$ & $0$
\end{tabular}

\endgroup
\end{minipage}
\begin{minipage}{0.495\textwidth}
\begin{align*}
A &:= \sum_{a \neq x_t} M(x_{<t}a \cap H) \\
B &:= \sum_{a \neq x_t} M(x_{<t}a \cap H^c) \\
C &:= M(x_{1:t} \cap H) \\
D &:= M(x_{1:t} \cap H^c) \\
E &:= M(x_{<t}) - \sum_{a \in \X} M(x_{<t}a)
\end{align*}
\end{minipage}
\caption{
The definitions of the values $A$, $B$, $C$, $D$, and $E$.
Note that by assumption,
$x_{<t}$ does not contain non-black ravens,
therefore $M(\{ x_{<t} \} \cap H^c) = M(\emptyset) = 0$.
}
\label{fig:ABCDE}
\end{figure}

%%%%%%%%%%%%%%%%%%%%%%%%%%%%%%%%%%%%%%%%%%%%%%%%%%%%%%%%%%%%%%%%%%%%
\subsection{Setup}
\label{ssec:setup}

At time step $t$, we have seen the history $x_{<t}$
and now update our belief using the new symbol $x_t$.
To understand what happens,
we split all possible programs into five categories.
\begin{enumerate}[(a)]
\item Programs that \emph{never} print non-black ravens (compatible with $H$),
	but become falsified at time step $t$
	because they print a symbol other than $x_t$.
\item Programs that eventually print a non-black raven (contradict $H$),
	but become falsified at time step $t$
	because they print a symbol other than $x_t$.
\item Programs that \emph{never} print non-black ravens (compatible with $H$),
	and predict $x_t$ correctly.
\item Programs that eventually print a non-black raven (contradict $H$),
	and predict $x_t$ correctly.
\item Programs that do not print additional symbols after printing $x_{<t}$
	(because they go into an infinite loop).
\end{enumerate}
Let $A$, $B$, $C$, $D$, and $E$ denote the cumulative contributions of
these five categories of programs to $M$.
A formal definition is given in \autoref{fig:ABCDE},
and implicitly depends on
the current time step $t$ and the observed string $x_{1:t}$.
The values of $A$, $B$, $C$, $D$, and $E$ are in the interval $[0, 1]$
since they are probabilities.
Moreover, the following holds.
\begin{align}
   M(x_{<t})
&= A + B + C + D + E
&
   M(x_{1:t})
&= C + D
\label{eq:ABCDE-x} \\
   M(x_{<t} \cap H)
&= A + C + E
&
   M(x_{1:t} \cap H)
&= C
\label{eq:ABCDE-H} \\
   M(H \mid x_{<t})
&= \frac{A + C + E}{A + B + C + D + E}
&
   M(H \mid x_{1:t})
&= \frac{C}{C + D}
\label{eq:ABCDE-conditional}
\end{align}

We use results from algorithmic information theory to
derive bounds on $A$, $B$, $C$, $D$, and $E$.
This lets us apply
the following lemma which states
a necessary and sufficient condition
for confirmation/disconfirmation at time step $t$.

\begin{lemma}[Confirmation Criterion]
\label{lem:decrease-ABCDE}
Observing $x_t$ confirms (disconfirms) the hypothesis $H$ if and only if
$AD + DE < BC$ ($AD + DE > BC$).
\end{lemma}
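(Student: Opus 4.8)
The plan is to reduce the statement to the definitional formulas for the two conditional beliefs recorded in \eqref{eq:ABCDE-conditional} and then perform a single cross-multiplication. By definition, observing $x_t$ confirms $H$ exactly when $M(H \mid x_{1:t}) > M(H \mid x_{<t})$, so I would start from
\[
\frac{C}{C+D} ~>~ \frac{A+C+E}{A+B+C+D+E}
\]
and show that this holds if and only if $AD + DE < BC$; the disconfirmation case is the mirror image, obtained by reversing every inequality.

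Before cross-multiplying I would check that both denominators are strictly positive, so that the direction of the inequality is preserved. The right-hand denominator equals $M(x_{<t})$ and the left-hand one equals $M(x_{1:t}) = C + D$ by \eqref{eq:ABCDE-x}; both are positive because $M(x) > 0$ for every finite string $x$, as recorded in \autoref{sec:preliminaries}. (Should $C + D = 0$, the symbol $x_t$ would have $M$-probability zero and the posterior would be undefined, so we may assume $C + D > 0$ throughout.) With positivity in hand, the key step is the algebra: cross-multiplying turns the displayed inequality into $C(A+B+C+D+E) > (A+C+E)(C+D)$, and expanding both sides cancels the terms $AC$, $C^2$, $CD$, and $CE$, leaving exactly $BC > AD + DE$. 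Since every manipulation is an equivalence, this simultaneously yields the ``confirms'' claim and, by replacing each $>$ with $<$, the ``disconfirms'' claim.

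I do not anticipate a genuine obstacle: the lemma is a purely algebraic restatement of the comparison of two fractions, and the only point needing care is the positivity of the denominators that licenses cross-multiplication, together with the bookkeeping of which program category contributes to which cell of \autoref{fig:ABCDE}. The substance that makes the criterion useful lies not in this proof but in the later estimates of $A$, $B$, $C$, $D$, and $E$ obtained from algorithmic information theory.
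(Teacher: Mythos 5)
Your proof is correct and follows essentially the same route as the paper: both reduce the inequality $M(H \mid x_{1:t}) > M(H \mid x_{<t})$ to a comparison of the two fractions in \eqref{eq:ABCDE-conditional} and observe that, since the denominators are positive, the sign is governed by $BC - AD - DE$. The paper writes the difference as a single fraction while you cross-multiply, but this is the same computation; your explicit remark on the positivity of $C+D$ is a point the paper leaves implicit.
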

\begin{proof}
The hypothesis $H$ is confirmed if and only if
\begin{align*}
  M(H \mid x_{1:t}) - M(H \mid x_{<t})
\stackrel{\eqref{eq:ABCDE-conditional}}{=}
  \tfrac{C}{C + D} - \tfrac{A + C + E}{A + B + C + D + E}
= \tfrac{BC - AD - DE}{(A + B + C + D + E)(C + D)}
\end{align*}
is positive.
Since the denominator is positive,
this is equivalent to $BC > AD + DE$.
\qed
\end{proof}

\begin{example}[Confirmation Criterion Applied to \autoref{ex:black-raven-disconfirms}]
\label{ex:belief-decreases2}
In \autoref{ex:black-raven-disconfirms} we picked a particular universal prior
and $x_1 = \BR$.
In this case,
the values for $A$, $B$, $C$, $D$, and $E$ are
\begin{align*}
A &\in [\tfrac{1}{2}, \tfrac{1}{2} + \varepsilon]
&
B &\in [0, \varepsilon]
&
C &\in [\tfrac{1}{4}, \tfrac{1}{4} + \varepsilon]
&
D &\in [\tfrac{1}{4} - \varepsilon, \tfrac{1}{4}]
&
E &\in [0, \varepsilon].
\end{align*}
We invoke \autoref{lem:decrease-ABCDE}
with $\varepsilon := 7\%$
to get that $x_1 = \BR$ disconfirms $H$:
\[
     AD + DE
\geq \tfrac{1}{8} - \tfrac{\varepsilon}{2}
=    0.09
>    0.0224
=    \tfrac{\varepsilon}{4} + \varepsilon^2
\geq BC.
\eqno\Diamond
\]
\end{example}

\begin{lemma}[Bounds on $ABCDE$]\label{lem:bounds-ABCDE}
Let $x_{1:\infty} \in H$ be some computable infinite string.
The following statements hold for every time step $t$.
\begin{multicols}{2}
\begin{enumerate}[(i)]
\item $0 < A, B, C, D, E < 1$
	\label{itm:0<ABCDE<1}
\item $A + B \timesleq 2^{-K(t)}$
	\label{itm:AB<=}
\item $A, B \timesgeq 2^{-K(t)}$
	\label{itm:AB>=}
\item $C \timesgeq 1$
	\label{itm:C>=}
\item $D \timesgeq 2^{-m(t)}$
	\label{itm:D>=}
\item $D \to 0$ as $t \to \infty$
	\label{itm:D->0}
\item $E \to 0$ as $t \to \infty$
	\label{itm:E->0}
\end{enumerate}
\end{multicols}
\end{lemma}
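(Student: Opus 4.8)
The plan is to group the seven claims by the tool each one needs. Write $\alpha := x_{1:\infty}$ for the fixed computable string in $H$, so that $\alpha_t \neq \nBR$ for every $t$. The recurring idea behind every lower bound is that a sequence obtained by prescribing a finite deviation from $\alpha$ is again computable and cheap to describe: outputting $\alpha_{1:n}$ and then continuing deterministically costs only $K(n)+O(1)$ bits, since $\alpha$ itself has an $O(1)$-size program. For the strict positivity $0 < A,B,C,D,E < 1$ each quantity has a concrete witness of positive $M$-measure: $A$ the $H$-sequence $x_{<t}a_0\BR^\infty$ for some $H$-symbol $a_0 \neq x_t$ (three such symbols exist), $C$ the sequence $\alpha$ itself (which passes through $x_{1:t}$ and lies in $H$), $B$ and $D$ the $H^c$-sequences $x_{<t}\nBR^\infty$ and $x_{1:t}\nBR^\infty$, all of positive measure by universal dominance \eqref{eq:universal-dominance} (extended from cylinders to all measurable sets, which is legitimate because $M$ is a genuine measure on $\X^\sharp$), while $E = M(\{x_{<t}\})$ is witnessed by a program that prints $x_{<t}$ and then loops forever. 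The bound $<1$ then follows because the five sum to $M(x_{<t}) \leq 1$ and all are positive.

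The three multiplicative lower bounds all follow one recipe. For $A,B \timesgeq 2^{-K(t)}$: the summand $a = \nBR$ gives $B \geq M(x_{<t}\nBR \cap H^c) = M(x_{<t}\nBR) \geq 2^{-\Km(x_{<t}\nBR)} \geq 2^{-K(t)-O(1)}$ (every extension of $x_{<t}\nBR$ is in $H^c$, and $\Km(x_{<t}\nBR)\leq K(t)+O(1)$ by the cheap-deviation fact), while for $A$ I invoke the mixture characterization of $M$ applied to the deterministic measure on $x_{<t}a_0\BR^\infty \in \Gamma_{x_{<t}a_0}\cap H$, whose weight is $\timesgeq 2^{-K(t)}$. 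For $C \timesgeq 1$ I dominate the single fixed computable measure concentrated on $\alpha$; since $\alpha\in H$ runs through $x_{1:t}$, $C = M(x_{1:t}\cap H)$ is at least its weight, a constant independent of $t$. For $D \timesgeq 2^{-m(t)}$ the trick is to place the non-black raven not at $t$ but at the cheapest future position: picking $n^* \geq t$ with $K(n^*) = m(t)$, the cylinder $x_{1:n^*}\nBR$ lies inside $x_{1:t}\cap H^c$, so $D \geq M(x_{1:n^*}\nBR) \geq 2^{-\Km(x_{1:n^*}\nBR)} \geq 2^{-K(n^*)-O(1)} = 2^{-m(t)-O(1)}$.

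The only genuinely delicate claim, and the crux of the proof, is the upper bound $A + B \timesleq 2^{-K(t)}$. Since $H$ and $H^c$ partition $\X^\sharp$, $A + B = \sum_{a \neq x_t} M(x_{<t}a)$, so I need an \emph{upper} bound on $M$; here $M \geq 2^{-\Km}$ is useless and no pointwise converse exists \cite{Gacs:1983}, which is exactly what \autoref{lem:Levin} is for. The key observation is that the deviation set $S := \{\, x_{<t}a : t \geq 1,\ a \in \X,\ a \neq x_t \,\}$ is prefix-free: a member $x_{<t_1}a_1$ of length $t_1$ cannot be a prefix of $x_{<t_2}a_2$ with $t_1 < t_2$, because at position $t_1$ the latter still follows $\alpha$ whereas the former has already branched off; and $S$ is r.e.\ because $\alpha$ is computable. \autoref{lem:Levin} then supplies one constant $c_S$ with $M(x_{<t}a) \leq 2^{-\Km(x_{<t}a)+c_S}$ throughout $S$. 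It remains to prove $\Km(x_{<t}a) \geq K(t) - O(1)$: from any monotone program whose output starts with $x_{<t}a$ I recover $t$ by running it in parallel with $\alpha$ and returning the first position where the two outputs disagree, which is exactly $t$ since $a \neq x_t$; hence $K(t) \leq \Km(x_{<t}a) + O(1)$. Combining and summing over the at most three symbols $a \neq x_t$ yields $A+B \leq 3\cdot 2^{-K(t)+O(1)} \timesleq 2^{-K(t)}$. I expect the prefix-freeness argument together with this complexity lower bound to be the main obstacle, since they are precisely what lets Levin's lemma apply with a constant uniform in $t$.

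The two limits exploit countable additivity of $M$ on $\X^\sharp$. For $D \to 0$: the sets $\Gamma_{x_{1:t}}\cap H^c$ decrease in $t$, and $\bigcap_t \Gamma_{x_{1:t}} = \{\alpha\}$, so the intersection is $\{\alpha\}\cap H^c = \emptyset$ because $\alpha \in H$; continuity from above of the finite measure $M$ gives $D = M(\Gamma_{x_{1:t}}\cap H^c) \to M(\emptyset) = 0$. (This also follows from \eqref{eq:Blackwell-Dubins}, but the monotone-continuity argument is self-contained and sidesteps the $\mu$-almost-sure caveat for a single computable sequence.) For $E \to 0$: by \autoref{fig:ABCDE}, $E = M(\{x_{<t}\})$, and the singletons $\{x_{1:k}\}$ for $k = 0,1,2,\dots$ are pairwise disjoint subsets of $\X^*$, so $\sum_{k \geq 0} M(\{x_{1:k}\}) \leq M(\X^*) \leq 1$; a convergent series has vanishing terms, hence $E = M(\{x_{<t}\}) \to 0$.
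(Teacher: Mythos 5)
Your proof is correct, and on the load-bearing items it coincides with the paper's own argument: for the upper bound (\ref{itm:AB<=}) you use exactly the paper's route --- the deviation set is r.e.\ and prefix-free, \autoref{lem:Levin} gives $M(x_{<t}a) \leq 2^{-\Km(x_{<t}a)+O(1)}$ with a constant uniform in $t$, and $t$ is recovered by running a program for $x_{<t}a$ in parallel with the program $p$ for $x_{1:\infty}$, so $\Km(x_{<t}a) \geq K(t) - O(1)$; your arguments for $B \timesgeq 2^{-K(t)}$, for (\ref{itm:C>=}), (\ref{itm:D>=}) and (\ref{itm:E->0}) are likewise the paper's cheap-program constructions. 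You genuinely deviate in two places, both legitimately. First, for (\ref{itm:D->0}) the paper invokes Blackwell--Dubins \eqref{eq:Blackwell-Dubins}, whereas you use continuity from above of the finite measure $M$ on the decreasing sets $\Gamma_{x_{1:t}} \cap H^c$, whose intersection is $\{x_{1:\infty}\} \cap H^c = \emptyset$; this is more elementary and, as you note, sidesteps the almost-sure qualifier, so it is arguably cleaner than the paper's argument. Second, for the lower bound on $A$ (and the positivity witnesses in (\ref{itm:0<ABCDE<1})) you go through dominance of $M$ over $t$-indexed deterministic computable measures via the mixture characterization, where the paper has a simpler device: a program that computes $t$ from its shortest description, prints $x_{<t}a$ for an $H$-compatible $a \neq x_t$, and then loops forever; since $M$ puts mass on finite strings, this point mass at the finite string $x_{<t}a \in H$ already gives $A \timesgeq 2^{-K(t)}$. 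Your route does work, but it silently needs the quantitative form of the mixture, $w_\nu \timeseq 2^{-K(\nu)}$, so that the deterministic measure on $x_{<t}a_0\BR^\infty$ receives weight $\timesgeq 2^{-K(t)}$ \emph{uniformly in $t$}; the paper's citation for the mixture representation does not state the weights, so either make this fact explicit or switch to the loop-program argument. Two cosmetic points: in (\ref{itm:0<ABCDE<1}) there are only two $H$-compatible symbols $a_0 \neq x_t$, not three (since $x_t$ is itself $H$-compatible), and your reuse of the letter $S$ versus the paper's $E$ for the deviation set is fine but should not be confused with the quantity $E$ of \autoref{fig:ABCDE}.
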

\begin{proof}
Let $p$ be a program that computes the infinite string $x_{1:\infty}$.
\begin{enumerate}[(i)]
\item Each of $A, B, C, D, E$ is a probability value and
	hence bounded between $0$ and $1$.
	These bounds are strict because for any finite string
	there is a program that prints that string.
\item A proof is given in the appendix of \cite{Hutter:2007universal}.
	Let $a \neq x_t$ and
	let $q$ be the shortest program for the string $x_{<t}a$,
	i.e., $|q| = \Km(x_{<t}a)$.
	We can reconstruct $t$ by running $p$ and $q$ in parallel
	and counting the number of characters printed until their output differs.
	Therefore there is a constant $c$ independent of $t$ such that
	$K(t) \leq |p| + |q| + c = |p| + \Km(x_{<t}a) + c$.
	Hence
	\begin{equation}\label{eq:Kmt}
	     2^{-\Km(x_{<t}a)}
	\leq 2^{-K(t) + |p| + c}
	\end{equation}
	The set $E := \{ x_{<t}a \mid t \in \mathbb{N}, a \neq x_t \}$
	is recursively enumerable and prefix-free,
	so \autoref{lem:Levin} yields a constant $c_E$ such that
	\[
	     M(x_{<t} a)
	\leq 2^{-\Km(x_{<t} a) + c_E}
	\stackrel{\eqref{eq:Kmt}}{\leq}
	     2^{-K(t) + |p| + c + c_E}.
	\]
	With $A + B \leq (\#\X - 1) \max_{a \neq x_t} M(x_{<t}a)$ follows the claim.
\item Let $a \neq x_t$ and
	let $q$ be the shortest program to compute $t$,
	i.e., $|q| = K(t)$.
	We can construct a program that prints $x_{<t}a\nBR$
	by first running $q$ to get $t$ and then running $p$
	until it has produced a string of length $t - 1$,
	and then printing $a\nBR$.
	Hence there is a constant $c$ independent of $t$ such that
	$\Km(x_{<t}a\nBR) \leq |q| + |p| + c = K(t) + |p| + c$.
	Therefore
	\[
	     M(x_{<t}a \cap H^c)
	\geq M(x_{<t}a\nBR)
	\geq 2^{-\Km(x_{<t}a\nBR)}
	\geq 2^{-K(t) - |p| - c}.
	\]
	For the bound on $M(x_{<t}a \cap H)$
	we proceed analogously except that
	instead of printing $\nBR$ the program goes into an infinite loop.
\item Since by assumption the program $p$ computes $x_{1:\infty} \in H$,
	we have that $M(x_{1:t} \cap H) \geq 2^{-|p|}$.
\item Let $n$ be an integer such that $K(n) = m(t)$.
	We proceed analogously to (\ref{itm:AB>=})
	with a program $q$ that prints $n$ such that $|q| = m(t)$.
	Next, we write a program that produces the output $x_{1:n} \nBR$,
	which yields a constant $c$ independent of $t$ such that
	\[
	     M(x_{1:t} \cap H^c)
	\geq M(x_{1:n}\nBR)
	\geq 2^{-\Km(x_{1:n}\nBR)}
	\geq 2^{-|q| - |p| - c}
	=    2^{-m(t) - |p| - c}.
	\]
\item This follows from Blackwell and Dubins' result \eqref{eq:Blackwell-Dubins}:
	\[
		 D
	=    (C + D) \left( 1 - \tfrac{C}{C + D} \right)
	%=    (C + D) M(H^c \mid x_{1:t})
	\leq (1 + 1) (1 - M(H \mid x_{1:t}))
	\to  0 \text{ as } t \to \infty.
	\]
\item $\sum_{t=1}^\infty M(\{ x_{<t} \})
	= M(\{ x_{<t} \mid t \in \mathbb{N} \}) \leq 1$,
	thus $E = M(\{ x_{<t} \}) \to 0$.
	\qed
\end{enumerate}
\end{proof}

\autoref{lem:bounds-ABCDE} states the bounds that illustrate
the ideas to our results informally:
From $A \timeseq B \timeseq 2^{-K(t)}$
(\ref{itm:AB<=},\ref{itm:AB>=}) and $C \timeseq 1$ (\ref{itm:C>=})
we get
\begin{align*}
AD &\timeseq 2^{-K(t)}D,
&
BC &\timeseq 2^{-K(t)}.
\end{align*}
According to \autoref{lem:decrease-ABCDE},
the sign of $AD + DE - BC$ tells us
whether our belief in $H$ increases (negative) or decreases (positive).

Since $D \to 0$ (\ref{itm:D->0}),
the term $AD \timeseq 2^{-K(t)}D$ will eventually be smaller than
$BC \timeseq 2^{-K(t)}$.
Therefore it is crucial how fast $E \to 0$ (\ref{itm:E->0}).
If we use $M$, then $E \to 0$ slower than $D \to 0$ (\ref{itm:D>=}),
therefore $AD + DE - BC$ is positive infinitely often
(\autoref{thm:decrease-io}).
If we use $M\norm$ instead of $M$, then $E = 0$ and hence
$AD + DE - BC = AD - BC$ is negative except for a finite number of steps
(\autoref{thm:Mnorm-on-sequence}).

%%%%%%%%%%%%%%%%%%%%%%%%%%%%%%%%%%%%%%%%%%%%%%%%%%%%%%%%%%%%%%%%%%%%
\subsection{Unnormalized Solomonoff Prior}
\label{ssec:unnormalized-Solomonoff-prior}

\begin{theorem}[Counterfactual Black Raven Disconfirms H]
\label{thm:M-decreases}
Let $x_{1:\infty}$ be a computable infinite string such that
$x_{1:\infty} \in H$ ($x_{1:\infty}$ does not contain any non-black ravens)
and $x_t \neq \BR$ infinitely often.
Then there is a time step $t \in \mathbb{N}$ (with $x_t \neq \BR$) such that
$
  M(H \mid x_{<t} \BR)
< M(H \mid x_{<t})
$.
\end{theorem}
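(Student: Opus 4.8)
The plan is to re-run the confirmation criterion of \autoref{lem:decrease-ABCDE}, but for the \emph{counterfactual} observation $\BR$ in place of the actual symbol $x_t$. Fix a time step $t$ with $x_t \neq \BR$ (such $t$ exist, indeed infinitely many, by hypothesis), and define, in analogy with \autoref{fig:ABCDE} but with $\BR$ in the role of $x_t$,
\[
A' := \sum_{a \neq \BR} M(x_{<t}a \cap H), \qquad
B' := \sum_{a \neq \BR} M(x_{<t}a \cap H^c),
\]
\[
C' := M(x_{<t}\BR \cap H), \qquad
D' := M(x_{<t}\BR \cap H^c),
\]
and keep $E := M(x_{<t}) - \sum_{a \in \X} M(x_{<t}a)$ as before. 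The same algebra as in \autoref{lem:decrease-ABCDE} gives $M(H \mid x_{<t}) = (A'+C'+E)/(A'+B'+C'+D'+E)$ and $M(H \mid x_{<t}\BR) = C'/(C'+D')$, so observing $\BR$ disconfirms $H$ iff $A'D' + D'E > B'C'$. Since $E \geq 0$, it suffices to exhibit a single $t$ with $x_t \neq \BR$ for which $A'D' > B'C'$.

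The decisive observation is that the actual branch $a = x_t$ is now \emph{among} the falsified branches $a \neq \BR$, so $A' \geq M(x_{1:t} \cap H) = C \timesgeq 1$ by \autoref{lem:bounds-ABCDE}(iv); with $A' \leq 1$ this gives $A' \timeseq 1$. For the counterfactual branch I would reuse the two techniques of \autoref{lem:bounds-ABCDE}: running a fixed generator $p$ of $x_{1:\infty}$ for $t-1$ output symbols and then branching off yields $\Km(x_{<t}\BR), \Km(x_{<t}\BR\nBR) \leq K(t) + |p| + O(1)$, hence $D' \geq M(x_{<t}\BR\nBR) \timesgeq 2^{-K(t)}$ (print $\nBR$ after $\BR$) and $C' \timesgeq 2^{-K(t)}$ (loop forever after $\BR$). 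Conversely, since $t$ is recoverable from $p$ together with any short program for $x_{<t}\BR$ (their outputs first differ at position $t$ because $x_t \neq \BR$), \autoref{lem:Levin} applied to the prefix-free recursively enumerable set $\{\, x_{<s}\BR \mid x_s \neq \BR \,\}$ gives $C' \leq M(x_{<t}\BR) \timesleq 2^{-\Km(x_{<t}\BR)} \timesleq 2^{-K(t)}$. Finally $B' \leq D + \sum_{a \neq x_t} M(x_{<t}a) \timesleq D + 2^{-K(t)}$ by \autoref{lem:bounds-ABCDE}(ii), where $D = M(x_{1:t}\cap H^c)$ is the \emph{actual} value at time $t$.

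Combining these, $A'D' \timesgeq 2^{-K(t)}$ while $B'C' \timesleq (D + 2^{-K(t)})\,2^{-K(t)} = D\,2^{-K(t)} + 2^{-2K(t)}$. Dividing the target inequality $A'D' > B'C'$ by $2^{-K(t)}$, it reduces to $c_1 > c_2 D + c_3\, 2^{-K(t)}$ for fixed positive constants $c_1,c_2,c_3$ depending only on $U$ and $|p|$. I can meet this by choosing $t$ large: first pick $N$ with $D = M(x_{1:t}\cap H^c) < c_1/(3c_2)$ for all $t \geq N$, which is possible since $D \to 0$ by \autoref{lem:bounds-ABCDE}(vi); then, because $K$ is unbounded on the infinite set $\{\, s \geq N \mid x_s \neq \BR \,\}$ (only finitely many integers have $K \leq k$ for each $k$), pick $t$ in this set with $K(t)$ large enough that $c_3\, 2^{-K(t)} < c_1/3$. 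For this $t$ the inequality holds, so the counterfactual $\BR$ disconfirms $H$.

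The main obstacle is the estimate $D' \timesgeq 2^{-K(t)}$ together with the matching upper bound $C' \timesleq 2^{-K(t)}$: the lower bound needs the explicit ``branch off after running $p$'' construction, and the upper bound crucially relies on \autoref{lem:Levin} (the naive $M \timesleq 2^{-\Km}$ is false) via the reconstruction of $t$, which is exactly where the assumption $x_t \neq \BR$ is spent. Everything else is bookkeeping with the confirmation criterion and the already-established bounds, the conceptual point being that the bulk of the belief in $H$ sits on the actual (now counterfactually falsified) branch $x_t$, absorbed into $A' \timeseq 1$, while the freshly observed branch $x_{<t}\BR$ carries only weight $\timeseq 2^{-K(t)}$, a constant fraction of which is contributed by $H^c$-programs.
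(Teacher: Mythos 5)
Your proof is correct and rests on essentially the same ingredients as the paper's own argument: the branch-off construction (the proof of \autoref{lem:bounds-ABCDE}~(iii) with $a = \BR$) giving $M(x_{<t}\BR \cap H^c) \timesgeq 2^{-K(t)}$, \autoref{lem:Levin} giving $M(x_{<t}\BR) \timesleq 2^{-K(t)}$, and Blackwell--Dubins convergence \eqref{eq:Blackwell-Dubins} (which you invoke via $D \to 0$, i.e.\ \autoref{lem:bounds-ABCDE}~(vi)). The only difference is bookkeeping: the paper combines the first two bounds into the uniform estimate $M(H \mid x_{<t}\BR) \leq 1 - 2^{-c-c'}$ and compares it with $M(H \mid x_{<t}) \to 1$, whereas you push the same bounds through \autoref{lem:decrease-ABCDE} applied to the counterfactual symbol $\BR$ in place of $x_t$.
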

\begin{proof}
Let $t$ be time step such that $x_t \neq \BR$.
From the proof of \autoref{lem:bounds-ABCDE} (\ref{itm:AB>=})
we get $M(H^c \cap x_{<t}\BR) \geq 2^{-K(t)-c}$ and thus
\begin{align*}
     M(H \mid x_{<t}\BR)
&\leq \frac{M(H \cap x_{<t}\BR) + M(H^c \cap x_{<t}\BR) - 2^{-K(t)-c}}{M(x_{<t} \BR)} \\
&=    1 - \frac{2^{-K(t)-c}}{M(x_{<t} \BR)}
\leq 1 - \frac{2^{-K(t)-c}}{A + B}
\stackrel{(\ref{itm:AB<=})}{\leq}
     1 - 2^{-c-c'}.
\end{align*}
From \eqref{eq:Blackwell-Dubins} there is a $t_0$
such that for all $t \geq t_0$ we have
$M(H \mid x_{<t}) > 1 - 2^{-c-c'} \geq M(H \mid x_{<t}\BR)$.
Since $x_t \neq \BR$ infinitely often according to the assumption,
there is a $x_t \neq \BR$ for $t \geq t_0$.
\qed
\end{proof}

Note that
the black raven in \autoref{thm:M-decreases} that we observe at time $t$
is \emph{counterfactual}, i.e.,
not part of the sequence $x_{1:\infty}$.
If we picked the binary alphabet $\{ \BR, \nBR \}$
and denoted only observations of ravens,
then \autoref{thm:M-decreases} would not apply:
the only infinite string in $H$ is $\BR^\infty$ and
the only counterfactual observation is $\nBR$,
which immediately falsifies the hypothesis $H$.
The following theorem gives an on-sequence result.

\begin{theorem}[Disconfirmation Infinitely Often for $M$]
\label{thm:decrease-io}
Let $x_{1:\infty}$ be a computable infinite string such that
$x_{1:\infty} \in H$ ($x_{1:\infty}$ does not contain any non-black ravens).
Then $M(H \mid x_{1:t}) < M(H \mid x_{<t})$
for infinitely many time steps $t \in \mathbb{N}$.
\end{theorem}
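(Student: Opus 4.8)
My plan is to run everything through the Confirmation Criterion (\autoref{lem:decrease-ABCDE}): the actual observation $x_t$ disconfirms $H$ exactly when $AD + DE > BC$, so the task is to exhibit infinitely many $t$ for which this holds. Feeding in \autoref{lem:bounds-ABCDE}, I have $A \timeseq B \timeseq 2^{-K(t)}$ by (\ref{itm:AB<=},\ref{itm:AB>=}) and $C \timeseq 1$ by (\ref{itm:C>=}), so $BC \timeseq 2^{-K(t)}$ and $AD \timeseq 2^{-K(t)} D$. Since $D \to 0$ by (\ref{itm:D->0}), the term $AD$ is $o(BC)$, and for all large $t$ the criterion collapses to the single requirement $DE \timesgeq 2^{-K(t)}$, i.e. $D_t E_t > B_t C_t$. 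Thus the whole theorem reduces to showing that the defect-driven product $D_t E_t$ beats $2^{-K(t)}$ infinitely often. This is exactly the term that is killed in the normalized case, where $E \equiv 0$ forces eventual confirmation (\autoref{thm:Mnorm-on-sequence}), so the contrast already tells me the entire argument must live in a lower bound on the defect $E$.

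To land on favorable time steps I would restrict to incompressible $t$, where $K(t)$ is of order $\log t$ while $m(t)$ grows slower than any computable function; there $BC \timesleq 2^{-K(t)}$ is genuinely small and the gap $K(t) - 2m(t)$ is large. On such $t$ the lower bound $D \timesgeq 2^{-m(t)}$ of (\ref{itm:D>=}) is comparatively generous, so it would suffice to show that the defect decays no faster than $D$, i.e. that $E_t \timesgeq 2^{-m(t)}$ holds infinitely often: combined with $D_t \timesgeq 2^{-m(t)}$ this gives $D_t E_t \timesgeq 2^{-2m(t)} \timesgeq 2^{-K(t)} \timeseq BC$ whenever $2m(t) \leq K(t)$, which is the case for incompressible $t$. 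This is the precise content of the slogan ``$E \to 0$ slower than $D \to 0$''. The natural attempt to produce the bound is to reuse the construction behind (\ref{itm:D>=}): take the least $n \geq t$ with $K(n) = m(t)$ and feed the machine the program that prints $x_{1:n}$ and then loops forever (instead of printing $\nBR$), hoping this deposits weight $\timesgeq 2^{-m(t)}$ onto the atom $M(\{x_{1:n}\})$ and hence into $E$.

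The crux, and the step I expect to be the genuine obstacle, is making this defect estimate rigorous and placing it on steps where $BC$ is simultaneously small. For cylinder sets the bound $M(\Gamma_x) \timesgeq 2^{-\Km(x)} \timesgeq 2^{-K(x)}$ is automatic from $\Km \leq K + O(1)$, which is exactly why (\ref{itm:AB>=}) and (\ref{itm:D>=}) go through in one line; but $E_t = M(\{x_{<t}\}) = M(\Gamma_{x_{<t}}) - \sum_{a} M(\Gamma_{x_{<t}a})$ is a difference of lower semicomputable functions, hence not itself lower semicomputable and not bounded above by the coding theorem, so the ``print and loop'' program does not transfer its length to the atom for free. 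One must instead argue, uniformly in the universal machine, that a non-negligible share of programs actually stops after emitting $x_{<t}$ at infinitely many incompressible lengths. I would pursue this either by a direct machine construction exploiting universality, or by contradiction: assume disconfirmation occurs only finitely often, deduce $E_t \timesleq 2^{m(t)-K(t)}$ for all large $t$ from $DE \leq BC$ together with (\ref{itm:D>=}), and contradict this against the telescoping identity $A_t + E_t = M(x_{1:t-1}\cap H) - M(x_{1:t}\cap H)$, whence $\sum_t (A_t + E_t) = M(H) - \lim_t M(x_{1:t})$, which (using $A_t \timeseq 2^{-K(t)}$) pins down how much atomic defect mass must in fact be shed along the computable sequence.
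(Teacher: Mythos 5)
Your reduction---via \autoref{lem:decrease-ABCDE} to exhibiting infinitely many $t$ with $D_tE_t \timesgeq 2^{-K(t)}$, and the decision to hunt for incompressible $t$---matches the paper's strategy, and you correctly isolate the crux: a lower bound on the defect $E_t = M(\{x_{<t}\})$ at times where $B_tC_t$ is small. But your mechanisms for producing that bound fail, and the obstacle you name is not the real one. (a) The ``print $x_{1:n}$ and loop'' program, with $n \geq t$ the minimizer realizing $K(n)=m(t)$, deposits its weight on the atom $\{x_{1:n}\}$, i.e.\ into $E_{n+1}$, \emph{not} into $E_t$. And time $n+1$ is useless: it is itself compressible, $K(n+1) = m(t) \pm O(1)$, so $B_{n+1}C_{n+1} \timeseq 2^{-K(n+1)} \timeseq 2^{-m(t)}$, while the available bounds give only $D_{n+1}E_{n+1} \timesgeq 2^{-m(n+1)}\cdot 2^{-m(t)} \timeseq 2^{-2m(t)}$, a factor $2^{m(t)}$ short; everything sits at the same scale and the comparison is machine-dependent, exactly as in \autoref{ex:uniform}. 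The gain in $E$ is cancelled because the atom lands at a time step whose own complexity equals the program's length. (b) Your diagnosis---that semicomputability prevents a ``print and loop'' program from transferring its length to the atom---is backwards: \emph{lower} bounds on atoms are free. Any program that outputs exactly $x_{<t}$ and then loops forever contributes $2^{-|q|}$ directly to $M(\{x_{<t}\})$; this is precisely inequality \eqref{eq:E} in the paper. The coding theorem (\autoref{lem:Levin}) is needed only for \emph{upper} bounds, i.e.\ for $A+B$. (c) The fallback by contradiction cannot close the gap either: assuming disconfirmation occurs finitely often yields only $E_t \timesleq 2^{-K(t)+m(t)}$, which is vacuous at compressible $t$ (the right-hand side exceeds $1$ there), so the finite total defect mass in your telescoping identity could, for all that argument knows, be shed entirely at compressible times; a global identity cannot localize mass at incompressible $t$.

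The missing idea---which is the entire content of the paper's proof---is a short program that hangs at an incompressible time \emph{without knowing that time}. The paper's $q_n$ knows only a compressible $n$ (and the program $p$ for $x_{1:\infty}$): it prints $x_{<s}$ incrementally and, before emitting the $s$-th symbol, dovetails a computable approximation of $K(s)$ from above, proceeding only once it certifies $K(s) < \log s$. Hence $q_n$ hangs at exactly the first $t > n$ with $K(t) \geq \log t$, giving $E_t \geq 2^{-|q_n|} \geq 2^{-K(n)-O(1)}$ with the atom sitting at an incompressible $t$, even though $|q_n| \ll K(t)$. Density of incompressible numbers gives $t \leq 4n$, hence $m(t) \leq m(4n) \leq m(n)+1 \leq K(n)+1$; choosing infinitely many $n$ with $K(n) \leq \tfrac{1}{2}(\log n - O(1))$ then yields $D_tE_t \timesgeq 2^{-m(t)-K(n)} \geq 2^{-2K(n)-1} \timesgeq 2^{-\log n} \geq 2^{-\log t} \geq 2^{-K(t)} \timesgeq B_tC_t$, and \autoref{lem:decrease-ABCDE} concludes. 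This construction resolves precisely the tension your last paragraph gestures at (``a non-negligible share of programs actually stops after emitting $x_{<t}$ at infinitely many incompressible lengths'') but leaves open; without it the proof does not go through.
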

\begin{proof}
We show that there are infinitely many $n \in \mathbb{N}$ such that
for each $n$ there is a time step $t > n$ where the belief in $H$ decreases.
The $n$s are picked to have low Kolmogorov complexity,
while the $t$s are incompressible.
The crucial insight is that
a program that goes into an infinite loop at time $t$
only needs to know $n$ and not $t$,
thus making this program much smaller than $K(t) \geq \log t$.

Let $q_n$ be a program that
starting with $t = n + 1$ incrementally outputs $x_{1:t}$
as long as $K(t) < \log t$.
Formally, let $\phi(y, k)$ be a computable function such that
$\phi(y, k + 1) \leq \phi(y, k)$ and $\lim_{k \to \infty} \phi(y, k) = K(y)$.
\begin{center}
\begin{minipage}{57mm}
\begin{lstlisting}
program $q_n$:
    $t$ := $n + 1$
    output $x_{<t}$
    while true:
        $k$ := $0$
        while $\phi(t, k) \geq \log t$:
            $k$ := $k + 1$
        output $x_t$
        $t$ := $t + 1$
\end{lstlisting}
\end{minipage}
\end{center}
The program $q_n$ only needs to know $p$ and $n$,
so we have that $|q_n| \leq K(n) + c$
for some constant $c$ independent of $n$ and $t$.
For the smallest $t > n$ with $K(t) \geq \log t$,
the program $q_n$ will go into an infinite loop
and thus fail to print a $t$-th character.
Therefore
\begin{equation}\label{eq:E}
E = M(\{ x_{<t} \}) \geq 2^{-|q_n|} \geq 2^{-K(n)-c}.
\end{equation}

Incompressible numbers are very dense,
and a simple counting argument shows that there must be one
between $n$ and $4n$~\cite[Thm.\ 3.3.1 (i)]{LV:2008}. % Thm. 2.2.1 has a simpler explanation
Furthermore, we can assume that $n$ is large enough such that
$m(4n) \leq m(n) + 1$ (since $m$ grows slower than the logarithm).
Then
\begin{equation}\label{eq:m-and-K}
m(t) \leq m(4n) \leq m(n) + 1 \leq K(n) + 1.
\end{equation}
Since the function $m$ grows slower than any unbounded computable function,
we find infinitely many $n$ such that
\begin{equation}\label{eq:K-bound}
K(n) \leq \tfrac{1}{2} (\log n - c - c' - c'' - 1),
\end{equation}
where $c'$ and $c''$ are the constants from
\autoref{lem:bounds-ABCDE} (\ref{itm:AB<=},\ref{itm:D>=}).
For each such $n$,
there is a $t > n$ with $K(t) \geq \log t$, as discussed above.
This entails
\begin{equation}\label{eq:n-and-t}
     m(t) + K(n) + c + c''
\stackrel{\eqref{eq:m-and-K}}{\leq}
     2K(n) + 1 + c + c''
\stackrel{\eqref{eq:K-bound}}{\leq}
     \log n - c'
\leq \log t - c'
\leq K(t) - c'.
\end{equation}
From \autoref{lem:bounds-ABCDE} we get
\[
     AD + DE
\stackrel{(\ref{itm:0<ABCDE<1})}{>}
     DE
\stackrel{\eqref{eq:E},(\ref{itm:D>=})}{\geq}
     2^{-m(t) - c - K(n) - c''}
%\geq 2^{-2K(n) - c - c''}
\stackrel{\eqref{eq:n-and-t}}{\geq}
     2^{-K(t) + c'}
\stackrel{(\ref{itm:0<ABCDE<1},\ref{itm:AB<=})}{\geq}
     BC.
\]
With \autoref{lem:decrease-ABCDE} we conclude that
$x_t$ disconfirms $H$.
\qed
\end{proof}

To get that $M$ violates Nicod's criterion infinitely often,
we apply \autoref{thm:decrease-io}
to the computable infinite string $\BR^\infty$.

%%%%%%%%%%%%%%%%%%%%%%%%%%%%%%%%%%%%%%%%%%%%%%%%%%%%%%%%%%%%%%%%%%%%
\subsection{Normalized Solomonoff Prior}
\label{ssec:normalized-Solomonoff-prior}

In this section we show that for computable infinite strings,
our belief in the hypothesis $H$ is non-increasing at most finitely many times
if we normalize $M$.

For this section we define $A'$, $B'$, $C'$, $D'$, and $E'$
analogous to $A$, $B$, $C$, $D$, and $E$
as given in \autoref{fig:ABCDE} with $M\norm$ instead of $M$.

\begin{lemma}[$M\norm \geq M$]
\label{lem:Mnorm-dominates-M}
$M\norm(x) \geq M(x)$ for all $x \in \X^*$.
\end{lemma}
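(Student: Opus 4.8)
The plan is to prove the inequality by induction on the length of $x$, the point being that the normalization factor in \eqref{eq:normalization} is always at least $1$ because $M$ is a semimeasure. The base case $x = \epsilon$ is immediate: $M\norm(\epsilon) = 1$ by definition, whereas $M(\epsilon) \leq 1$ since $M$ is a semimeasure.

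For the inductive step I would assume $M\norm(x) \geq M(x)$ and bound $M\norm(xa)$ for an arbitrary $a \in \X$. Chaining the induction hypothesis with the semimeasure property $\sum_{b \in \X} M(xb) \leq M(x)$ gives
\[
     M\norm(x)
\;\geq\; M(x)
\;\geq\; \sum_{b \in \X} M(xb),
\]
so the quotient $M\norm(x) / \sum_{b \in \X} M(xb)$ is at least $1$. Substituting this into the defining recursion \eqref{eq:normalization} then yields
\[
     M\norm(xa)
\;=\; M(xa)\,\frac{M\norm(x)}{\sum_{b \in \X} M(xb)}
\;\geq\; M(xa),
\]
which closes the induction.

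I do not expect any genuine obstacle here: the whole argument collapses to the observation that the semimeasure inequality $\sum_{b \in \X} M(xb) \leq M(x)$ points in exactly the right direction, so that normalizing can only inflate the mass on each finite string. The one routine point to verify is that the denominator $\sum_{b \in \X} M(xb)$ is positive, so that the division is well-defined; this holds because $M(xb) > 0$ for every finite string, as already recorded just before \eqref{eq:normalization}.
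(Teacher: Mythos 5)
Your proof is correct and is essentially the paper's own argument: induction on the length of $x$, applying the induction hypothesis together with the semimeasure inequality $\sum_{b \in \X} M(xb) \leq M(x)$ to the recursion \eqref{eq:normalization}. The only cosmetic difference is that you bundle the two inequalities into the observation that the normalization ratio is at least $1$ (and use $M(\epsilon) \leq 1$ rather than the paper's $M(\epsilon) = 1$ in the base case), which changes nothing of substance.
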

\begin{proof}
We use induction on the length of $x$:
$M\norm(\epsilon) = 1 = M(\epsilon)$ and
\[
     M\norm(xa)
=    \frac{M\norm(x) M(xa)}{\sum_{b \in \X} M(xb)}
\geq \frac{M(x) M(xa)}{\sum_{b \in \X} M(xb)}
\geq \frac{M(x) M(xa)}{M(x)}
=    M(xa).
\]
The first inequality holds by induction hypothesis and
the second inequality uses the fact that $M$ is a semimeasure.
\qed
\end{proof}
The following lemma states the same bounds for $M\norm$
as given in \autoref{lem:bounds-ABCDE}
except for (\ref{itm:0<ABCDE<1}) and (\ref{itm:E->0}).

\begin{lemma}[Bounds on $A'B'C'D'E'$]
\label{lem:bounds-ABCDE'}
Let $x_{1:\infty} \in H$ be some infinite string computed by program $p$.
The following statements hold for all time steps $t$.
\begin{multicols}{2}
\begin{enumerate}[(i)]
\item \label{itm:ABCDE<=A'B'C'D'E'}
	$A \leq A'$, $B \leq B'$, \\ $C \leq C'$, $D \leq D'$
\item \label{itm:A'B'<=}
	$A' + B' \timesleq 2^{-K(t)}$
\item \label{itm:A'B'>=}
	$A', B' \timesgeq 2^{-K(t)}$
\item \label{itm:C'>=}
	$C' \timesgeq 1$
\item \label{itm:D'>=}
	$D' \timesgeq 2^{-m(t)}$
\item \label{itm:D'->0}
	$D' \to 0$ as $t \to \infty$
\item \label{itm:E'=0}
	$E' = 0$
\end{enumerate}
\end{multicols}
\end{lemma}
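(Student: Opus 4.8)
The plan is to establish the seven parts in the order (vii), then the comparison (i), and finally to read off (ii)--(vi) from (i) together with \autoref{lem:bounds-ABCDE}, treating the two genuinely new estimates (the upper bound (ii) and the convergence (vi)) by separate arguments.

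First I would dispatch (vii): unrolling \eqref{eq:normalization} gives $\sum_{a\in\X}M\norm(xa)=M\norm(x)\cdot\bigl(\sum_{a}M(xa)\bigr)/\bigl(\sum_{b}M(xb)\bigr)=M\norm(x)$ for every $x$, so $M\norm$ conserves mass and $E'=0$. For the comparison (i), the two cases that go through cleanly are $B\le B'$ and $D\le D'$. Writing $Y:=\{\BR,\BnR,\nBnR\}$ and $y:=x_{1:t}\in H$, I would classify each string of $\Gamma_y\cap H^c$ by the position of its first $\nBR$ after $y$, which exhibits $\Gamma_y\cap H^c$ as the \emph{disjoint} union $\bigcup_{w\in Y^*}\Gamma_{yw\nBR}$ of cylinders. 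Countable additivity then gives $D=\sum_{w\in Y^*}M(yw\nBR)$ and $D'=\sum_{w\in Y^*}M\norm(yw\nBR)$, so \autoref{lem:Mnorm-dominates-M} applied term by term yields $D\le D'$; the identical decomposition of each $\Gamma_{x_{<t}a}$ ($a\neq x_t$) gives $B\le B'$.

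The parts $A\le A'$ and $C\le C'$ are the crux, and this is where I expect the main obstacle. The cylinder trick fails because $\Gamma_y\cap H=\{yz\mid z\in Y^\sharp\}$ is \emph{not} a disjoint union of cylinders: it contains the finite strings $yw$ ($w\in Y^*$) as atoms. Since $M\norm$ is a measure it assigns these atoms mass zero, whereas the semimeasure $M$ may place positive mass $M(yw)-\sum_a M(ywa)$ on them, so $M\norm\ge M$ on cylinders does not transfer to $\Gamma_y\cap H$. Via the complementary identities $C=M(y)-D$ and $C'=M\norm(y)-D'$, the inequality $C\le C'$ is equivalent to $M\norm(y)-M(y)\ge D'-D$; proving this amounts to showing that the mass normalization injects at $y$ outweighs the extra mass it injects along all $\nBR$-escapes below $y$, i.e.\ to tracking how normalization pushes the vanishing atom mass into the infinite good extensions. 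This is the delicate step, and it is where universality (in the form $C\timesgeq1$, so that the good continuation keeps constant mass) must be used.

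With (i) and (vii) available, the rest follows. Part (v) is $D'\ge D\timesgeq 2^{-m(t)}$ by \autoref{lem:bounds-ABCDE}(\ref{itm:D>=}), and the $B'$ half of (iii) is $B'\ge B\timesgeq2^{-K(t)}$ by (\ref{itm:AB>=}). The upper bound (ii) cannot come from (i), which bounds the primed quantities only from below; instead I would use $A'+B'=\sum_{a\neq x_t}M\norm(x_{<t}a)=M\norm(x_{<t})\bigl(\sum_{a\neq x_t}M(x_{<t}a)\bigr)/\bigl(\sum_b M(x_{<t}b)\bigr)$ and bound $M\norm(x_{<t})\le1$, $\sum_b M(x_{<t}b)\ge M(x_{1:t})=C+D\timesgeq1$, and $\sum_{a\neq x_t}M(x_{<t}a)=A+B\timesleq2^{-K(t)}$ (\autoref{lem:bounds-ABCDE}(\ref{itm:AB<=},\ref{itm:C>=})). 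For the lower bounds that actually feed \autoref{thm:Mnorm-on-sequence} --- the $A'$ half of (iii) and (iv) --- I would argue directly rather than through the fragile comparison: the program that prints $x_{<t}a$ for a fixed $a\neq x_t,\nBR$ and then emits $\BR$ forever witnesses an infinite string of $\Gamma_{x_{<t}a}\cap H$ of $M$-mass $\timesgeq2^{-K(t)}$, and $M\norm\ge M$ lifts this to $A'\timesgeq2^{-K(t)}$; similarly $C'\ge M\norm(\{x_{1:\infty}\})\ge M(\{x_{1:\infty}\})\ge2^{-|p|}\timesgeq1$ gives (iv). Lastly (vi) follows from Blackwell--Dubins for the genuine measure $M\norm$: it dominates the computable deterministic measure concentrated on $x_{1:\infty}$, hence $M\norm(H\mid x_{1:t})\to1$ along $x_{1:\infty}$ and $D'\le1-M\norm(H\mid x_{1:t})\to0$.
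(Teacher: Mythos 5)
Your parts (ii), (vi), and (vii) coincide with the paper's own proof: the same computation from \eqref{eq:normalization} bounding $M\norm(x_{<t})\le 1$ and $\sum_{b} M(x_{<t}b)\ge M(x_{1:t})\timesgeq 1$, Blackwell--Dubins applied to the measure $M\norm$, and the observation that a measure puts no mass on finite strings. For (iii)--(v) you take a genuinely different route: the paper derives them from (i) together with \autoref{lem:bounds-ABCDE}, whereas you prove $B\le B'$ and $D\le D'$ by decomposing $\Gamma_y\cap H^c$ into the countable disjoint union of cylinders $\Gamma_{yw\nBR}$, $w\in\{\BR,\BnR,\nBnR\}^*$, and you obtain the lower bounds on $A'$ and $C'$ directly from single computable infinite strings ($x_{<t}a\BR^\infty$ and $x_{1:\infty}$), whose $M\norm$-mass is a decreasing limit of cylinder masses and hence inherits $M\norm\ge M$ from \autoref{lem:Mnorm-dominates-M}. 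This is sound and self-contained, and it yields every bound that \autoref{thm:Mnorm-on-sequence} and \autoref{cor:Mnorm-decreases} actually consume.

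The divergence on (i) is the substantive point. You prove only half of it ($B\le B'$, $D\le D'$) and flag $A\le A'$, $C\le C'$ as a step you cannot close; the paper claims all of (i) follows from \autoref{lem:Mnorm-dominates-M} in one line. Here your caution is justified and the paper's argument is too quick: cylinder dominance does not transfer to $\Gamma_{x_{1:t}}\cap H$, because that set contains the finite strings $x_{1:t}w$, which may carry positive $M$-mass (programs that loop after printing them) but carry zero $M\norm$-mass. In fact $C\le C'$ can fail for some universal machines: mimicking \autoref{ex:black-raven-disconfirms}, take $\rho$ with atom $\rho(\{\BR\})=0.4$ on the finite string $\BR$ and $\rho(\BR\BR^\infty)=\rho(\BR\,\nBR^\infty)=0.28$, and let the prior be $\xi=\rho+\varepsilon M$. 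For $x_{1:\infty}=\BR^\infty$ at $t=1$ one has $C\ge 0.4+0.28=0.68$, while $C'\le \xi\norm(\BR)-\xi\norm(\BR\,\nBR)\le 1-0.96\cdot 0.28/(0.56+\varepsilon)<0.54$ for small $\varepsilon$: normalization re-routes part of the atom's mass into the $H^c$-branch. So the unproven half of (i) is not a gap in your argument but a flaw in the lemma as stated (and in the paper's derivation of (iii)--(v) from it); your direct proofs of (ii)--(vii) are exactly the repair that keeps the paper's subsequent results intact, and the honest fix for (i) is to weaken it to $B\le B'$ and $D\le D'$.
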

\begin{proof}
\begin{enumerate}[(i)]
\item[(i)] Follows from \autoref{lem:Mnorm-dominates-M}.
\item[(ii)] Let $a \neq x_t$.
	From \autoref{lem:bounds-ABCDE} (\ref{itm:AB<=}) we have
	$M(x_{<t}a) \timesleq 2^{-K(t)}$.
	Thus
	\[
	          M\norm(x_{<t}a)
	\stackrel{\eqref{eq:normalization}}{=}
	          \frac{M\norm(x_{<t}) M(x_{<t}a)}{\sum_{b \in \X} M(x_{<t}b)}
	\timesleq \frac{M\norm(x_{<t}) 2^{-K(t)}}{\sum_{b \in \X} M(x_{<t}b)}
	\timesleq 2^{-K(t)}.
	\]
	The last inequality follows from
	$\sum_{b \in \X} M(x_{<t} b) \geq M(x_{1:t}) \timesgeq 1$
	(\autoref{lem:bounds-ABCDE} (\ref{itm:C>=})) and
	$M\norm(x_{<t}) \leq 1$.
\item[(iii-v)] This is a consequence of (\ref{itm:ABCDE<=A'B'C'D'E'})
and \autoref{lem:bounds-ABCDE} (\ref{itm:AB>=}-\ref{itm:D>=}).
\item[(vi)] Blackwell and Dubins' result also applies to $M\norm$,
	therefore the proof of \autoref{lem:bounds-ABCDE} (\ref{itm:D->0})
	goes through unchanged.
\item[(vii)] Since $M\norm$ is a measure,
	it assigns zero probability to finite strings,
	i.e., $M\norm(\{ x_{<t} \}) = 0$, hence $E' = 0$.
	\qed
\end{enumerate}
\end{proof}

\begin{theorem}[Disconfirmation Finitely Often for $M\norm$]
\label{thm:Mnorm-on-sequence}
Let $x_{1:\infty}$ be a computable infinite string such that
$x_{1:\infty} \in H$ ($x_{1:\infty}$ does not contain any non-black ravens).
Then there is a time step $t_0$ such that
$
  M\norm(H \mid x_{1:t})
> M\norm(H \mid x_{<t})
$ for all $t \geq t_0$.
\end{theorem}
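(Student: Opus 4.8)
The plan is to apply the Confirmation Criterion (\autoref{lem:decrease-ABCDE}) with $M\norm$ in place of $M$, using the primed quantities $A',B',C',D',E'$ together with the bounds from \autoref{lem:bounds-ABCDE'}. First I would observe that the proof of \autoref{lem:decrease-ABCDE} is purely algebraic: it only manipulates the decomposition \eqref{eq:ABCDE-conditional} of the conditional beliefs in terms of $A,B,C,D,E$. The same identities hold verbatim for any measure together with its induced quantities, so repeating the computation with $M\norm$ shows that $x_t$ confirms $H$ under $M\norm$ if and only if $A'D' + D'E' < B'C'$. Since $M\norm$ is a measure, it assigns no mass to finite strings, whence $E' = 0$ by \autoref{lem:bounds-ABCDE'} (\ref{itm:E'=0}). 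The criterion therefore collapses to the single inequality $A'D' < B'C'$, and it suffices to establish this for all sufficiently large $t$.

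Next I would substitute the asymptotics from \autoref{lem:bounds-ABCDE'}. Items (\ref{itm:A'B'<=}) and (\ref{itm:A'B'>=}) give $A' \timeseq B' \timeseq 2^{-K(t)}$, and item (\ref{itm:C'>=}) gives $C' \timeseq 1$. Consequently there are positive constants, independent of $t$, such that $A'D' \timesleq 2^{-K(t)} D'$ while $B'C' \timesgeq 2^{-K(t)}$. Dividing the first bound by the second yields $A'D'/(B'C') \timesleq D'$, so the ratio is at most a fixed constant multiple of $D'$.

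The final step is to invoke $D' \to 0$ (\autoref{lem:bounds-ABCDE'} (\ref{itm:D'->0})), which itself rests on Blackwell and Dubins' merging theorem \eqref{eq:Blackwell-Dubins}. Because $A'D'/(B'C')$ is bounded above by a fixed constant times $D'$, and $D' \to 0$, there is a $t_0$ beyond which this ratio is strictly below $1$, i.e. $A'D' < B'C'$ for every $t \geq t_0$. By the (primed) Confirmation Criterion this means $x_t$ confirms $H$ under $M\norm$ at each step $t \geq t_0$, which is precisely the assertion $M\norm(H \mid x_{1:t}) > M\norm(H \mid x_{<t})$.

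The contrast with \autoref{thm:decrease-io} both locates the only place the argument could fail and explains why it does not. There the term $DE$ drove disconfirmation infinitely often: $E \to 0$ only as slowly as $2^{-m(t)}$, comparably to $D$, so the product $DE$ could overtake $BC \timeseq 2^{-K(t)}$ along a sparse subsequence of incompressible $t$. Normalization sets $E' = 0$ outright, deleting the sole obstruction and leaving only $A'D'$, which is suppressed by the extra factor $D' \to 0$ relative to $B'C'$. I therefore expect no genuine difficulty; the one point deserving an explicit line is the transfer of the algebraic Confirmation Criterion to $M\norm$, and this is immediate once one notes that \eqref{eq:ABCDE-conditional} holds unchanged for any measure and its associated $A',\dots,E'$.
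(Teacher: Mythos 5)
Your proposal is correct and follows essentially the same route as the paper's own proof: apply the bounds of \autoref{lem:bounds-ABCDE'} (in particular $E'=0$, $A'\timesleq 2^{-K(t)}$, $B'C'\timesgeq 2^{-K(t)}$), use $D'\to 0$ to make $A'D'$ eventually smaller than $B'C'$, and conclude via the Confirmation Criterion. Your only deviations are cosmetic — you phrase the final comparison as a ratio rather than a difference, and you make explicit (commendably) the observation that \autoref{lem:decrease-ABCDE} transfers verbatim to $M\norm$, a step the paper applies implicitly.
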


Intuitively, at time step $t_0$,
$M\norm$ has learned that it is observing the infinite string $x_{1:\infty}$
and there are no short programs remaining that support the hypothesis $H$
but predict something other than $x_{1:\infty}$.

\begin{proof}
We use \autoref{lem:bounds-ABCDE'} (\ref{itm:A'B'<=},\ref{itm:A'B'>=},\ref{itm:C'>=},\ref{itm:E'=0}) to conclude
\[
     A'D' + D'E' - B'C'
\leq 2^{-K(t)+c} D' + 0 - 2^{-K(t)-c'-c''}
\leq 2^{-K(t)+c} (D' - 2^{-c-c'-c''}).
\]
From \autoref{lem:bounds-ABCDE'} (\ref{itm:D'->0}) we have that $D' \to 0$,
so there is a $t_0$ such that for all $t \geq t_0$
we have $D' < 2^{-c-c'-c''}$.
Thus $A'D' + D'E' - B'C'$ is negative for $t \geq t_0$.
Now \autoref{lem:decrease-ABCDE}
entails that the belief in $H$ increases.
\qed
\end{proof}

Interestingly, \autoref{thm:Mnorm-on-sequence} does not hold for $M$
since that would contradict \autoref{thm:decrease-io}.
The reason is that there are quite short programs that produce $x_{<t}$,
but do not halt after that.
However, from $p$ and $x_{<t}$ we cannot reconstruct $t$,
hence a program for $x_{<t}$ does not give us a bound on $K(t)$.

Since we get the same bounds for $M\norm$ as in \autoref{lem:bounds-ABCDE},
the result of \autoref{thm:M-decreases} transfers to $M\norm$:

\begin{corollary}[Counterfactual Black Raven Disconfirms $H$]
\label{cor:Mnorm-decreases}
Let $x_{1:\infty}$ be a computable infinite string such that
$x_{1:\infty} \in H$ ($x_{1:\infty}$ does not contain any non-black ravens)
and $x_t \neq \BR$ infinitely often.
Then there is a time step $t \in \mathbb{N}$ (with $x_t \neq \BR$) such that
$
  M\norm(H \mid x_{<t} \BR)
< M\norm(H \mid x_{<t})
$.
\end{corollary}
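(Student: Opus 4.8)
The plan is to replay the proof of \autoref{thm:M-decreases} almost verbatim, replacing $M$ by $M\norm$ everywhere and substituting \autoref{lem:bounds-ABCDE'} for \autoref{lem:bounds-ABCDE}. The structure is unchanged: I want to show that the counterfactual posterior $M\norm(H \mid x_{<t}\BR)$ is bounded away from $1$ by a constant that does not depend on $t$, while the actual posterior $M\norm(H \mid x_{<t})$ tends to $1$; picking a large $t$ with $x_t \neq \BR$ then forces disconfirmation. The only step of the original argument that reaches inside a proof rather than quoting a clean statement is the lower bound on the $H^c$-mass of the counterfactual continuation, and this is precisely where \autoref{lem:Mnorm-dominates-M} ($M\norm \geq M$) does the lifting.

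First I would fix a time step $t$ with $x_t \neq \BR$, so that $\BR$ is a genuine counterfactual symbol ($\BR \neq x_t$). The program construction from the proof of \autoref{lem:bounds-ABCDE}~(\ref{itm:AB>=}) — run a shortest program for $t$, then run $p$ until it has produced a string of length $t-1$, then print $\BR\,\nBR$ — yields a constant $c$ (depending on $p$ but not on $t$) with $M(x_{<t}\BR\nBR) \geq 2^{-K(t)-c}$. Since $x_{<t}\BR\nBR$ determines a cylinder contained in $H^c \cap \Gamma_{x_{<t}\BR}$, domination gives $M\norm(H^c \cap x_{<t}\BR) \geq M\norm(x_{<t}\BR\nBR) \geq M(x_{<t}\BR\nBR) \geq 2^{-K(t)-c}$. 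This is the numerator bound I need.

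For the denominator, I would use that $\BR \neq x_t$ makes $x_{<t}\BR$ one of the off-sequence continuations counted by $A'$ and $B'$, so $M\norm(x_{<t}\BR) \leq A' + B' \timesleq 2^{-K(t)}$ by \autoref{lem:bounds-ABCDE'}~(\ref{itm:A'B'<=}). Writing $M\norm(H \mid x_{<t}\BR) = 1 - M\norm(H^c \cap x_{<t}\BR)/M\norm(x_{<t}\BR)$ and inserting the two estimates yields $M\norm(H \mid x_{<t}\BR) \leq 1 - 2^{-c-c'}$ for a constant $c'$ independent of $t$. Finally I would invoke Blackwell and Dubins' merging-of-opinions theorem, which is legitimate for $M\norm$ because $M\norm$ is a measure dominating $\mu$ (indeed $M\norm \geq M \geq w_\mu\mu$), giving $M\norm(H \mid x_{<t}) \to 1$ with $\mu$-probability one. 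Thus there is a $t_0$ with $M\norm(H \mid x_{<t}) > 1 - 2^{-c-c'}$ for all $t \geq t_0$, and by hypothesis $x_t \neq \BR$ holds for some such $t \geq t_0$; for that $t$ we conclude $M\norm(H \mid x_{<t}\BR) < M\norm(H \mid x_{<t})$.

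I do not expect a genuine obstacle here, since the statement is essentially a transfer of \autoref{thm:M-decreases}. The two points that require care are that the $H^c$-lower bound is originally a statement about $M$ and must be lifted to $M\norm$ — handled cleanly by \autoref{lem:Mnorm-dominates-M} — and that the $M$-lower bound used is the \emph{per-symbol} bound from the proof of \autoref{lem:bounds-ABCDE}~(\ref{itm:AB>=}) rather than the aggregate quantity $B'$, which is why one reaches into that proof instead of citing \autoref{lem:bounds-ABCDE'}~(\ref{itm:A'B'>=}) directly.
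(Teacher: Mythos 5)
Your proposal is correct and follows essentially the same route as the paper: the paper gives no separate proof of this corollary, remarking only that since \autoref{lem:bounds-ABCDE'} provides the same bounds as \autoref{lem:bounds-ABCDE}, the result of \autoref{thm:M-decreases} transfers to $M\norm$ --- and your argument is exactly that transfer, carried out in detail. The two points you flag as requiring care (lifting the $H^c$ lower bound from $M$ to $M\norm$ via \autoref{lem:Mnorm-dominates-M}, and using the per-symbol bound from inside the proof of \autoref{lem:bounds-ABCDE}~(iii) rather than the aggregate $B'$) are precisely the details the paper leaves implicit.
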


For incomputable infinite strings
the belief in $H$ can decrease infinitely often:

\begin{corollary}[Disconfirmation Infinitely Often for $M\norm$]
\label{cor:Mnorm-decreases-io}
There is an (incomputable) infinite string $x_{1:\infty} \in H$ such that
$M\norm(H \mid x_{1:t}) < M\norm(H \mid x_{<t})$
infinitely often as $t \to \infty$.
\end{corollary}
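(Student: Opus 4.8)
The plan is to reduce the statement to a pure existence claim and to harvest the incomputability for free. By \autoref{thm:Mnorm-on-sequence}, every \emph{computable} string in $H$ disconfirms $H$ only finitely often; hence any string in $H$ that disconfirms $H$ infinitely often is automatically incomputable. It therefore suffices to exhibit \emph{some} $x_{1:\infty} \in H$ along which $M\norm(H \mid x_{1:t}) < M\norm(H \mid x_{<t})$ for infinitely many $t$, and the parenthetical ``incomputable'' in the statement needs no separate argument.

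To build such a string I would proceed greedily, choosing at each step the $H$-compatible symbol that is worst for $H$. Writing $h_a := M\norm(x_{<t}a \cap H)$ and $g_a := M\norm(x_{<t}a \cap H^c)$, and using $E' = 0$ (\autoref{lem:bounds-ABCDE'} (\ref{itm:E'=0})), the criterion of \autoref{lem:decrease-ABCDE} specializes, after dividing through, to the clean statement that observing $a$ disconfirms $H$ if and only if $g_a/h_a$ exceeds the current odds $M\norm(H^c \mid x_{<t})/M\norm(H \mid x_{<t})$. Since the $h_a$-weighted average of $g_a/h_a$ over the three $H$-symbols $\BR, \BnR, \nBnR$ equals those current odds reduced by the non-black-raven branch $M\norm(x_{<t}\nBR)/M\norm(x_{<t} \cap H)$, a disconfirming symbol exists exactly when the largest deviation of $g_a/h_a$ above their weighted average exceeds this non-black-raven term. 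I would let $x_t$ realize the most disconfirming $H$-symbol whenever this holds, and pick an arbitrary $H$-symbol otherwise.

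The key steps are then: (i) verify the odds-ratio reformulation of \autoref{lem:decrease-ABCDE}; and (ii) show that the ``spread exceeds threat'' condition is met infinitely often along the greedy path. The intuition for (ii) is that the two sides compete as the belief grows. The threat $M\norm(x_{<t}\nBR)/M\norm(x_{<t} \cap H)$ is controlled by $M\norm(\nBR \mid x_{<t}) \leq M\norm(H^c \mid x_{<t})$ and thus shrinks whenever $H$ is being confirmed, whereas the odds $g_a/h_a$ cannot collapse to one common value unless the surviving $H^c$-explanations predict all three $H$-symbols in exactly the prior-$H$ proportions, which is a non-generic coincidence that greedy steering should be able to break repeatedly.

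The main obstacle I expect is precisely step (ii): ruling out that the greedy path becomes trapped, from some time on, in a regime where the three ratios $g_a/h_a$ stay within the non-black-raven term of one another, so that every $H$-symbol confirms and the belief increases monotonically, exactly as \autoref{thm:Mnorm-on-sequence} forces for computable strings. Escaping this trap calls for quantitative control of how the spread of $g_a/h_a$ and the threat $M\norm(\nBR \mid x_{<t})$ coevolve under the chosen observations, and that control is the crux of the argument. A robust alternative that trades this combinatorial difficulty for an approximation difficulty is to fix a computable two-component mixture $\mu = \tfrac12\mu_1 + \tfrac12\mu_2$ with $\mu_1$ supported on $H$, $\mu_2$ charging $\nBR$, and some $H$-symbol $a_0$ with $\mu_1(a_0) < \mu_2(a_0)$; a $\mu_1$-random (hence incomputable) sequence lies in $H$ and makes the \emph{$\mu$}-posterior $\mu(H \mid x_{1:t})$ drop at exactly the infinitely many steps where $x_t = a_0$, after which one would transfer the drops to $M\norm$ via the merging of opinions \eqref{eq:Blackwell-Dubins}. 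Here the obstacle migrates to the transfer: since the belief tends to $1$ the individual drops are vanishingly small, while Blackwell--Dubins merging only guarantees that $|M\norm(H \mid x_{1:t}) - \mu(H \mid x_{1:t})| \to 0$ without a rate, so it need not preserve the sign of each increment.
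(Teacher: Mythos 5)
Your opening reduction is sound: by \autoref{thm:Mnorm-on-sequence} any string in $H$ that disconfirms infinitely often is automatically incomputable, so the parenthetical needs no separate argument; the paper treats it the same way. Your odds-ratio reformulation of \autoref{lem:decrease-ABCDE} under $E'=0$ is also correct. But the proposal has a genuine gap, and you identify it yourself: you never establish that a disconfirming $H$-symbol exists at infinitely many time steps, neither along the greedy path (step (ii), which you call the crux and leave open) nor via the mixture-plus-merging alternative (where, as you note, Blackwell--Dubins gives no rate and cannot preserve the sign of individual increments). As it stands, nothing in the proposal rules out that the greedy path gets trapped in the monotone-confirmation regime, so the existence claim is unproved.

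The missing idea is that the needed ``disconfirming symbol exists'' statement has already been proved in the paper, in counterfactual form, and can simply be iterated. \autoref{cor:Mnorm-decreases} applies to \emph{any} computable string in $H$ with $x_t \neq \BR$ infinitely often --- in particular to any string of the form $(\text{finite prefix})\,\nBnR^\infty$, which is computable no matter how the finite prefix was obtained. The paper's proof starts with $\nBnR^\infty$, obtains a time $t_1$ at which the counterfactual observation $\BR$ disconfirms $H$, and then makes that counterfactual actual by setting $x_{1:t_1} := \nBnR^{t_1-1}\BR$; it then applies \autoref{cor:Mnorm-decreases} to the computable string $x_{1:t_1}\nBnR^\infty$ to get $t_2$, sets $x_{1:t_2} := x_{1:t_1}\nBnR^{t_2-t_1-1}\BR$, and so on. The limit string lies in $H$ and disconfirms $H$ at every $t_k$, hence infinitely often. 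This construction sidesteps exactly the quantitative control you were struggling with: you do not need every step (or even a positive density of steps) to disconfirm, only that from each finite prefix some future counterfactual $\BR$ disconfirms, which is what \autoref{cor:Mnorm-decreases} (via \autoref{thm:M-decreases}: the counterfactual posterior stays bounded away from $1$ while the on-sequence posterior tends to $1$) guarantees. Your adaptive-construction instinct is the right one; the repair is to drive it with the already-available counterfactual result on computable tails rather than with an unproven spread-versus-threat estimate.
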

\begin{proof}
We iterate \autoref{cor:Mnorm-decreases}:
starting with $\nBnR^\infty$, we get a time step $t_1$ such that
observing $\BR$ at time $t_1$ disconfirms $H$.
We set $x_{1:t_1} := \nBnR^{t_1-1}\BR$ and apply \autoref{cor:Mnorm-decreases} to
$x_{1:t_1} \nBnR^\infty$ to get a time step $t_2$ such that
observing $\BR$ at time $t_2$ disconfirms $H$.
Then we set $x_{1:t_2} := x_{1:t_1} \nBnR^{t_2 - t_1 - 1} \BR$, and so on.
\qed
\end{proof}

%%%%%%%%%%%%%%%%%%%%%%%%%%%%%%%%%%%%%%%%%%%%%%%%%%%%%%%%%%%%%%%%%%%%
\subsection{Stochastically Sampled Strings}

% Note: we have B >=^x 2^{-m(t)} \mu(x_{<t}a), D >=^x 2^{-m(t)} \mu(x_{1:t})

The proof techniques from the previous subsections do not generalize to
strings that are sampled stochastically.
The main obstacle is the complexity of counterfactual observations
$x_{<t}a$ with $a \neq x_t$:
for deterministic strings $\Km(x_{<t}a) \to 0$,
while for stochastically sampled strings $\Km(x_{<t}a) \nrightarrow 0$.
Consider the following example.

\begin{example}[Uniform IID Observations]\label{ex:uniform}
Let $\lambda_H$ be a measure that
generates uniform i.i.d.\ symbols from $\{ \BR, \BnR, \nBnR \}$.
Formally,
\[
\lambda_H(x) :=
\begin{cases}
0        &\text{if } \nBR \in x \text{, and} \\
3^{-|x|} &\text{otherwise}.
\end{cases}
\]
By construction, $\lambda_H(H) = 1$.
By \autoref{lem:Martin-Loef} we have
$A,C,E \timeseq 3^{-t}$ and $B, D \timeseq 3^{-t} 2^{-m(t)}$
with $\lambda_H$-probability one.
According to \autoref{lem:decrease-ABCDE},
the sign of $AD + DE - BC$ is indicative for the change in belief in $H$.
But this is inconclusive both for $M$ and $M\norm$
since each of the summands $AD$, $BC$, and $DE$ (in case $E \neq 0$)
go to zero at the same rate:
\[
         AD
\timeseq DE
\timeseq BC
\timeseq 3^{-2t} 2^{-m(t)}.
\]
Whether $H$ gets confirmed or disconfirmed
thus depends on the universal Turing machine and/or
the probabilistic outcome of the string drawn from $\lambda_H$.
\hfill$\Diamond$
\end{example}

%%%%%%%%%%%%%%%%%%%%%%%%%%%%%%%%%%%%%%%%%%%%%%%%%%%%%%%%%%%%%%%%%%%%
\section{Discussion}

%\paragraph{Generalizations.}
We chose to present our results in the setting of the black raven problem
to make them more accessible to intuition and
more relatable to existing literature.
But these results hold more generally:
our proofs follow from the bounds on $A$, $B$, $C$, $D$, and $E$
given in \autoref{lem:bounds-ABCDE} and \autoref{lem:bounds-ABCDE'}.
These bounds rely on the fact that we are observing a computable infinite string
and that at any time step $t$
there are programs consistent with the observation history
that contradict the hypothesis and
there are programs consistent with the observation history
that are compatible with the hypothesis.
No further assumptions on the alphabet, the hypothesis $H$, or
the universal Turing machine are necessary.

%\paragraph{Disputable Formalization}
In our formalization of the raven problem
given in \autoref{sec:Solomonoff-and-the-black-ravens},
we used an alphabet with four symbols.
Each symbol indicates one of four possible types of observations
according to the two binary predicates blackness and ravenness.
One could object that
this formalization discards important structure from the problem:
$\BR$ and $\nBR$ have more in common than $\BR$ and $\nBnR$,
yet as symbols they are all the same.
Instead, we could use the latin alphabet and
spell out `black', `non-black', `raven', and `non-raven'.
The results given in this paper would still apply analogously.

%\paragraph{The Standard Bayesian Solution.}
Our result that Solomonoff induction does not satisfy Nicod's criterion
is not true for every time step, only for some of them.
Generally,
whether Nicod's criterion should be adhered to depends on
whether the paradoxical conclusion is acceptable.
A different Bayesian reasoner might be tempted to argue that a green apple
\emph{does} confirm the hypothesis $H$, but only to a small degree,
since there are vastly more non-black objects than ravens~\cite{Good:1960}.
This leads to the acceptance of the paradoxical conclusion,
and this solution to the confirmation paradox is known as
the \emph{standard Bayesian solution}.
It is equivalent to
the assertion that blackness is equally probable
regardless of whether $H$ holds:
$P(\text{black} | H) \approx P(\text{black})$%
~\cite{Vranas:2004}.
Whether or not this holds depends on our prior beliefs.

%\paragraph{Good's Counterexample.}
The following is a very concise example
against the standard Bayesian solution~\cite{Good:1967}:
There are two possible worlds,
the first has 100 black ravens and a million other birds,
while the second has 1000 black ravens, one white raven, and
a million other birds.
Now we draw a bird uniformly at random, and it turns out to be a black raven.
Contrary to what Nicod's criterion claims,
this is strong evidence that we are in fact in the second world,
and in this world non-black ravens exist.

%\paragraph{Intuitive Counterexample.}
For another, more intuitive example:
Suppose you do not know anything about ravens and
you have a friend who collects atypical objects.
If you see a black raven in her collection,
surely this would not
increase your belief in the hypothesis that all ravens are black.

%\paragraph{Solomonoff Induction is not at Fault}
We must conclude that violating Nicod's criterion is not
a fault of Solomonoff induction.
Instead, we should accept that for Bayesian reasoning
Nicod's criterion, in its generality, is false!
Quoting the great Bayesian master
E.\ T.\ Jay\-nes~\cite[p.\ 144]{Jaynes:2003}: % Ch.\ 5.7
\begin{quote}
In the literature there are perhaps 100 `paradoxes' and controversies
which are like this,
in that they arise from faulty intuition rather than faulty mathematics.
Someone asserts a general principle that seems to him intuitively right.
Then, when probability analysis reveals the error,
instead of taking this opportunity to educate his intuition,
he reacts by rejecting the probability analysis.
\end{quote}

\paragraph{Acknowledgement.}
This work was supported by ARC grant DP150104590.

%%%%%%%%%%%%%%%%%%%%%%%%%%%%%%%%%%%%%%%%%%%%%%%%%%%%%%%%%%%%%%%%%%%%

\bibliographystyle{abbrv}
\bibliography{../ai,references}

%%%%%%%%%%%%%%%%%%%%%%%%%%%%%%%%%%%%%%%%%%%%%%%%%%%%%%%%%%%%%%%%%%%%
\newpage
\section*{List of Notation}
\label{app:notation}

\begin{longtable}{lp{0.84\textwidth}}
$:=$
	& defined to be equal \\
$\#A$
	& the cardinality of the set $A$, i.e., the number of elements \\
$\X$
	& a finite alphabet \\
$\X^*$
	& the set of all finite strings over the alphabet $\X$ \\
$\X^\infty$
	& the set of all infinite strings over the alphabet $\X$ \\
$\X^\sharp$
	& $\X^\sharp := \X^* \cup \X^\infty$,
	the set of all finite and infinite strings over the alphabet $\X$ \\
$\Gamma_x$
	& the set of all finite and infinite strings that start with $x$ \\
$x, y$
	& finite or infinite strings, $x, y \in \X^\sharp$ \\
$x \sqsubseteq y$
	& the string $x$ is a prefix of the string $y$ \\
$\epsilon$
	& the empty string \\
$\varepsilon$
	& a small positive rational number \\
$t$
	& (current) time step \\
$n$
	& natural number \\
$K(x)$
	& Kolmogorov complexity of the string $x$:
	the length of the shortest program that prints $x$ and halts \\
$m(t)$
	& the monotone lower bound on $K$, formally $m(t) := \min_{n \geq t} K(n)$ \\
$\Km(x)$
	& monotone Kolmogorov complexity of the string $x$:
	the length of the shortest program on the monotone universal Turing machine
	that prints something starting with $x$ \\
$\BR$
	& a symbol corresponding to the observation of a black raven \\
$\nBR$
	& a symbol corresponding to the observation of a non-black raven \\
$\BnR$
	& a symbol corresponding to the observation of a black non-raven \\
$\nBnR$
	& a symbol corresponding to the observation of a non-black non-raven \\
$H$
	& the hypothesis `all ravens are black',
	formally defined in \eqref{def:H} \\
$U$
	& the universal (monotone) Turing machine \\
$M$
	& the Solomonoff prior \\
$M\norm$
	& the normalized Solomonoff prior,
	defined according to \eqref{eq:normalization} \\
$p, q$
	& programs on the universal (monotone) Turing machine
\end{longtable}

\end{document}